\documentclass{article} %
\usepackage{arxiv}

\usepackage[utf8]{inputenc} %
\usepackage[T1]{fontenc}    %
\usepackage{hyperref}       %
\usepackage{url}            %
\usepackage{booktabs}       %
\usepackage{amsfonts}       %
\usepackage{nicefrac}       %
\usepackage{microtype}      %
\usepackage{xcolor}         %
\usepackage{natbib}         %

\usepackage{amsmath}
\usepackage{amsthm}
\usepackage{amssymb}
\usepackage{mathtools}

\usepackage{multirow}

\usepackage{stmaryrd}
\usepackage[a]{esvect}
\usepackage{xspace}
\usepackage{dsfont}
\usepackage{bm}
\usepackage{pgfplots}
\usepackage{times}
\usepackage[capitalize,noabbrev]{cleveref}

\usepackage{wrapfig}

\usepackage{tikz}
\usetikzlibrary{calc}
\usetikzlibrary{arrows.meta}
\usetikzlibrary{fit}
\usetikzlibrary{positioning}
\usetikzlibrary{decorations.pathreplacing}
\pgfplotsset{compat=1.18}

\newtheorem{definition}{Assumption}
\newtheorem{lemma}{Lemma}

\newtheorem{theorem}{Theorem}

\newcommand{\ie}{{i.e.}\@\xspace}

\newcommand{\best}[1]{\textbf{#1}}
\newcommand{\rest}[1]{\textit{\textbf{#1}}}

\newcommand{\real}{\ensuremath\mathds{R}}
\newcommand{\binary}{\ensuremath\{0,1\}}

\newcommand{\mat}[1]{{\ensuremath\boldsymbol{\mathbf{#1}}}}

\newcommand{\trace}{\ensuremath\operatorname{tr}} %
\newcommand{\hadamard}{\ensuremath\circ}
\newcommand{\hp}{\ensuremath\hadamard}

\newcommand{\gr}[1]{\ensuremath{#1}}

\newcommand{\D}{\gr{D}}

\newcommand{\adj}{\ensuremath{\mat{A}}}
\newcommand{\hadj}{\ensuremath{\mat{H}}}
\newcommand{\wadj}{\ensuremath{\mat{W}}}

\newcommand{\porient}{\ensuremath{\mat{T}_{\prec_{\prt,\varepsilon}}}}
\newcommand{\sorient}{\ensuremath{\mat{S}_{t,\varepsilon}(\prt)}}
\newcommand{\pa}{\ensuremath{\operatorname{pa}}}  %
\newcommand{\relu}{\ensuremath{\operatorname{ReLU}}}

\newcommand{\loss}{\ensuremath\mathcal{L}}
\newcommand{\model}[1]{\textsc{#1}\xspace}

\newcommand{\notears}{\model{notears}}

\newcommand{\cosmo}{\model{cosmo}}

\newcommand{\nocurl}{\model{nocurl}}
\newcommand{\nocurljoint}{\model{nocurl-u}}
\newcommand{\nobears}{\model{nobears}}

\newcommand{\dagma}{\model{dagma}}
\newcommand{\enco}{\model{enco}}

\newcommand{\tmpi}{\model{tmpi}}

\newcommand{\prt}{\ensuremath{\mat{p}}}
\newcommand{\Prt}{{\ensuremath{\mat{P}}}}

\newcommand{\metric}[1]{\text{#1}\xspace}
\newcommand{\tpr}{\metric{TPR}}

\newcommand{\nhd}{\metric{NHD}}
\newcommand{\fpr}{\metric{FPR}}

\newcommand{\auc}{\metric{AUC}}

\title{Constraint-Free Structure Learning with Smooth Acyclic Orientations}

\author{%
Riccardo Massidda\\
Department of Computer Science\\
Università di Pisa\\
\texttt{riccardo.massidda@phd.unipi.it}
\And
Francesco Landolfi\\
Department of Computer Science\\
Università di Pisa\\
\texttt{francesco.landolfi@phd.unipi.it}
\And
Martina Cinquini\\
Department of Computer Science\\
Università di Pisa\\
\texttt{martina.cinquini@phd.unipi.it}
\And
Davide Bacciu\\
Department of Computer Science\\
Università di Pisa\\
\texttt{davide.bacciu@unipi.it}
}

\begin{document}
\maketitle

\begin{abstract}
The structure learning problem consists of fitting data generated by a Directed Acyclic Graph (DAG) to correctly reconstruct its arcs. In this context, differentiable approaches constrain or regularize the optimization problem using a continuous relaxation of the acyclicity property. The computational cost of evaluating graph acyclicity is cubic on the number of nodes and significantly affects scalability. In this paper we introduce \cosmo, a constraint-free continuous optimization scheme for acyclic structure learning. At the core of our method, we define a differentiable approximation of an orientation matrix parameterized by a single priority vector. Differently from previous work, our parameterization fits a smooth orientation matrix and the resulting acyclic adjacency matrix without evaluating acyclicity at any step. Despite the absence of explicit constraints, we prove that \cosmo always converges to an acyclic solution. In addition to being asymptotically faster, our empirical analysis highlights how \cosmo performance on graph reconstruction compares favorably with competing structure learning methods.
\end{abstract}
\section{Introduction}

Directed Acyclic Graphs (DAGs) are a fundamental tool in several fields to represent probabilistic or causal information about the world~\citep{koller2009probabilistic,pearl2009causality}. A fundamental problem in this context concerns the retrieval of the underlying structure between a set of variables, \ie, the problem of identifying which arcs exist between nodes associated to the variables of interest~\citep{spirtes2000causation}. In recent years, applications of structure learning to causal discovery led to growing interest in tackling the problem using gradient-based methods that optimize a smooth representation of a DAG~\citep{vowels2022d}. For instance, while not suitable for causal discovery \emph{per se}~\citep{reisach2021beware}, acyclic structure learners are fundamental components of most state-of-the-art continuous causal discovery algorithms~\citep{lachapelle_gradient-based_2020,brouillard2020differentiable,lorch_amortized_2022}. A well-established technique, popularized by \notears~\citep{zheng2018notears}, consists of computing the trace of the matrix-exponential of the adjacency matrix, which is differentiable and provably zero if and only if the corresponding graph is acyclic.  However, despite their widespread adoption, \notears-like acyclicity constraints impose a cubic number of operations in the number of nodes per optimization step and substantially prevent scalable and applicable continuous discovery algorithms.

\begin{figure}[t]
    \centering
    \def\distance{12mm}
\def\smalldistance{11mm}
\def\largedistance{22mm}
\def\minidistance{6mm}
\begin{tikzpicture}[node distance={\distance}, main/.style = {draw, circle}] 
  \node[main] (X1) {$X_1$}; 
  \node[main] (X2) [right of=X1,xshift=\distance]{$X_2$}; 
  \node[main] (X3) [below of=X1,xshift=\distance]{$X_3$}; 
  \node[main] (X4) [below of=X3,xshift=-\distance]{$X_4$}; 
  \node[main] (X5) [right of=X4,xshift=\distance]{$X_5$}; 
  \draw[-{Latex}] (X1) to[bend left] (X2);
  \draw[-{Latex}] (X1) to[bend right] (X3);
  \draw[-{Latex}] (X2) -- (X1);
  \draw[-{Latex}] (X3) -- (X1);
  \draw[-{Latex}] (X3) to[bend left] (X5);
  \draw[-{Latex}] (X4) -- (X3);
  \draw[-{Latex}] (X4) -- (X5);
  \draw[-{Latex}] (X5) -- (X2);
  \draw[-{Latex}] (X5) -- (X3);
  \draw[-{Latex}] (X5) to[bend left] (X4);

  \draw [decorate,decoration={brace,amplitude=2mm,mirror,raise=\minidistance}] (X4.west) -- (X5.east) node[midway,yshift=-\smalldistance]{Directed Graph};

  \node[main] (XOR1) [right of=X2,xshift=\minidistance]{$X_1$}; 
  \node[main] (XOR2) [right of=XOR1,xshift=\distance]{$X_2$}; 
  \node[main] (XOR3) [below of=XOR1,xshift=\distance]{$X_3$}; 
  \node[main] (XOR4) [below of=XOR3,xshift=-\distance]{$X_4$}; 
  \node[main] (XOR5) [right of=XOR4,xshift=\distance]{$X_5$}; 
  \draw[-{Latex}] (XOR2) -- (XOR1);
  \draw[-{Latex}] (XOR3) -- (XOR1);
  \draw[-{Latex}] (XOR3) -- (XOR2);
  \draw[-{Latex}] (XOR4) -- (XOR1);
  \draw[-{Latex}] (XOR4) to[bend left] (XOR2);
  \draw[-{Latex}] (XOR4) -- (XOR3);
  \draw[-{Latex}] (XOR4) -- (XOR5);
  \draw[-{Latex}] (XOR5) to[bend right] (XOR1);
  \draw[-{Latex}] (XOR5) -- (XOR2);
  \draw[-{Latex}] (XOR5) -- (XOR3);
  
  \draw[-{Latex}, dashed,color=gray] (XOR1) to[bend left] (XOR2);
  \draw[-{Latex}, dashed,color=gray] (XOR3) to[bend right] (XOR5);

  \node[draw=none] (HADAMARD) at ($(X3)!0.5!(XOR3)$) {\LARGE$\otimes$};

  \node[draw=none] (PRT) [above of=XOR3,yshift=20mm] {$
  \prt = \left[
    4.2\hspace{0.5em}
    4.1\hspace{0.5em}
    2.0\hspace{0.5em}
    {-0.4}\hspace{0.5em}
    1.9
  \right]
  $};
  \draw[->] (PRT) -- ($(XOR3.center) - (0, -5.5em)$);
  \draw[->] (PRT) -- ($(XOR3.center) - (0, -5.5em)$) node [midway, fill=white] {$S_{t\varepsilon}$};

  \draw [decorate,decoration={brace,amplitude=2mm,mirror,raise=\minidistance}] (XOR4.west) -- (XOR5.east) node[midway,yshift=-\smalldistance]{Smooth Acyclic Orientation};

  \node[main] (XDEF1) [right of=XOR2,xshift=\minidistance] {$X_1$}; 
  \node[main] (XDEF2) [right of=XDEF1,xshift=\distance]{$X_2$}; 
  \node[main] (XDEF3) [below of=XDEF1,xshift=\distance]{$X_3$}; 
  \node[main] (XDEF4) [below of=XDEF3,xshift=-\distance]{$X_4$}; 
  \node[main] (XDEF5) [right of=XDEF4,xshift=\distance]{$X_5$}; 
  \draw[-{Latex}] (XDEF2) -- (XDEF1);
  \draw[-{Latex}] (XDEF3) -- (XDEF1);
  \draw[-{Latex}] (XDEF4) -- (XDEF3);
  \draw[-{Latex}] (XDEF4) -- (XDEF5);
  \draw[-{Latex}] (XDEF5) -- (XDEF2);
  \draw[-{Latex}] (XDEF5) -- (XDEF3);
  \draw[-{Latex},dashed,color=gray] (XDEF1) to[bend left] (XDEF2);
  \draw[-{Latex},dashed,color=gray] (XDEF3) to[bend left] (XDEF5);
  
  \node[draw=none] (EQUAL) at ($(XOR3)!0.5!(XDEF3)$) {\LARGE$=$};

  \draw [decorate,decoration={brace,amplitude=2mm,mirror,raise=\minidistance}] (XDEF4.west) -- (XDEF5.east) node[midway,yshift=-\smalldistance]{Directed Acyclic Graph};
\end{tikzpicture}
    \caption{%
    \cosmo frames acyclic structure learning as an unconstrained optimization problem. To optimize an acyclic adjacency matrix (\emph{right}), we propose to learn a directed graph (\emph{left}) and a priority vector on the nodes (\emph{top}). To optimize the priority vector, we introduce a \emph{smooth} acyclic orientation (\emph{center}) where each lower-priority node feeds into each higher-priority node. \textcolor{gray}{Gray dashed} arrows denote arcs with approximately zero weight. We prove that by annealing the temperature, the smooth acycic orientation~$S_{t,\varepsilon}$ converges to a discrete orientation and, consequently, to a DAG.
    }\label{fig:main}
\end{figure}
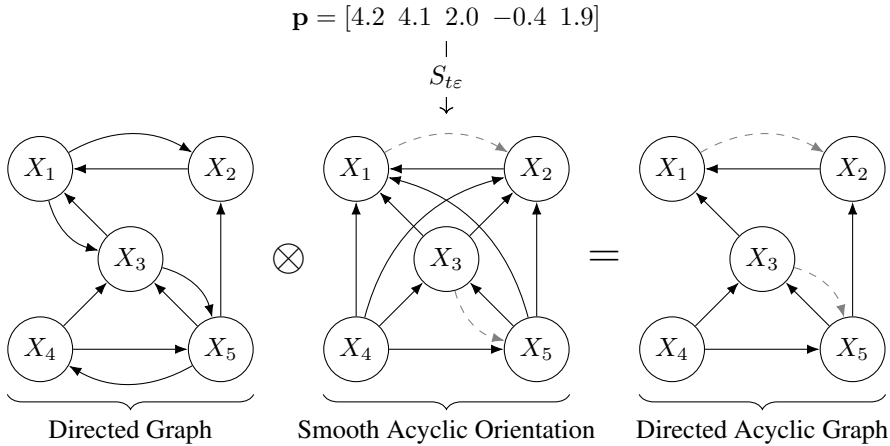

In this context, we propose a novel formulation and optimization scheme for learning acyclic graphs that avoids evaluating the acyclicity of the solution in any optimization step. Notably, our proposal does not sacrifice theoretical guarantees of asymptotic convergence to acyclic solutions which apply to existing structure learning methods~\citep{ng2023structure}. At the core of our scheme lies a novel definition of \emph{smooth} orientation matrix, \ie, a differentiable approximation of an orientation matrix parameterized by a priority vector on the graph nodes. The priority vector represents a discrete orientation where each node has an outgoing arc to all nodes with higher priority. We define our \emph{smooth} orientation matrix by applying a tempered sigmoid to the pair-wise priority differences, which equals the discrete orientation in the limit of the sigmoid temperature to zero. By annealing temperature during training, we prove that the we are effectively decreasing an upper-bound on the acyclicity of the solution. Further, we show that the parameterization represents the space of Directed Acyclic Graph (DAGs) as a differentiable function of a directed graph and our smooth orientation matrix. Since our approach only requires a quadratic number of operations per optimization step, its constraint-free scheme can be used as a direct and faster replacement for the \notears constrained optimization problem.
Overall, we propose a methodology to perform \underline{\textbf{co}}nstraint-free structure learning with \underline{\textbf{sm}}ooth acyclic \underline{\textbf{o}}rientations, which we name \cosmo~(Figure~\ref{fig:main}).

\paragraph{Contributions.}
We summarize the key contributions of this paper as follows:
\begin{itemize}
    \item%
    We introduce a differentiable relaxation of an acyclic orientation matrix, which we name \emph{smooth} orientation matrix~(Definition~\ref{def:smooth_orientation_matrix}). The matrix depends on a temperature value that controls the approximation of the discrete orientation matrix. We prove that we can represent all and only DAGs as the element-wise multiplication of a weighted adjacency matrix and our novel \emph{smooth} orientation matrix~(Theorem~\ref{theo:smooth_orientation}).
    \item%
    We propose \cosmo, the first unconstrained optimization approach that learns a DAG entirely avoiding acyclicity constraints~(Section~\ref{subsec:learnsmoothorient}). \cosmo represents DAGs through a \emph{smooth} orientation matrix and requires solving a unique optimization problem while annealing the temperature. Since reconstructing the DAG requires a number of operations quadratic on the number of nodes, \cosmo is an order of magnitude faster than cubic-expensive constrained methods in literature.
    \item%
    We connect our proposed scheme to existing constrained approaches and prove that annealing the temperature during training effectively decreases an upper bound on the acyclicity of the \emph{smooth} orientation matrix~(Theorem~\ref{theo:smoothub}).
    \item%
    We perform a thorough experimental comparison of \cosmo and competing structure learning approaches~(Section~\ref{sec:experiments}). The empirical results report how \cosmo achieves comparable structure recovery performances in significantly less time. Further, we highlight how \cosmo consistently outperforms previous partially unconstrained structure learning proposals.%
\end{itemize}

In the following, we discuss related works in Section~\ref{sec:related} and report the necessary background on graph theory and structure learning in Section~\ref{sec:background}.
Then, we introduce \cosmo and our original contributions in Section~\ref{sec:method}.
Finally, we report and discuss our empirical analysis in Section~\ref{sec:experiments}.
\section{Related Works}\label{sec:related}

In this section, we report related works aiming to improve, approximate, or avoid altogether the constrained optimization scheme introduced by \notears~\citep{zheng2018notears} for acyclic structure learning.
We summarize the comparison of our proposal with the existing literature in Table~\ref{tab:relatedcomparison}.

\begin{table}
    \caption{%
        Summary comparison of our proposal, \cosmo, with competing approaches. To the best of our knowledge, we are the first to propose a parameterization that enables unconstrained learning of an acyclic graph without trading off on the adjacency matrix rank, the exactness of the acyclicity constraint, or the assumption of observational data. To express computational complexity, we define $d$ as the number of nodes and $k$ as the maximum length of iterative approaches. [$\dagger$]:~\nocurl requires a preliminary solution obtained by partially solving a cubic-expensive constrained optimization problem.
    }
    \label{tab:relatedcomparison}
    \begin{center}
    \begin{tabular}{lccc}
    \toprule
         Method & Complexity & Constraint \\
         \midrule
         \notears~\citep{zheng2018notears}     & $O(d^3)$         & Exact         \\
         \dagma~\citep{bello2022dagma}         & $O(d^3)$         & Exact         \\
         \nobears~\citep{lee2019scaling}       & $O(kd^2)$        & Approximated  \\
         \tmpi~\citep{zhang2022truncated}      & $O(kd^2)$        & Approximated  \\
         \nocurl~\citep{yu2021dags}            & $O{(d^2)}\dagger$ & Partial       \\
         \textbf{\cosmo}                       & $\bm{O(d^2)}$    & \textbf{None} \\
         \bottomrule
    \end{tabular}
    \end{center}
\end{table}

\paragraph{Constraint Reformulation.}
\nobears~\citep{fang2020notearslr} proposes to estimate the acyclicity constraint by approximating the spectral radius of the adjacency matrix. Given a maximum number $k$ of iterations, the constraint can then be evaluated on a graph with $d$ nodes in $O(kd^2)$ time. Similarly, \tmpi~\citep{zhang2022truncated} proposes an iterative approximation of the constraint that also results in $O(kd^2)$ computational complexity. Finally, \dagma~\citep{bello2022dagma} reformulates the acyclicity constraint as the log-determinant of a linear transformation of the adjacency matrix. While still asymptotically cubic in complexity, the use of log-determinant is significantly faster in practice because of widespread optimizations in common linear algebra libraries~\citep[pp.19]{bello2022dagma}.

\paragraph{Low-Rank Approximation.}
Several works extended \notears by assuming that the adjacency matrix of the underlying graph does not have full rank either to reduce the number of trainable parameters~\citep{fang2020notearslr} or to improve computational complexity~\citep{lopez2022dcdfg}. In this work, we deal with possibly full-rank matrices and do not directly compare with low-rank solutions.

\paragraph{Unconstrained DAG Learning.}
Few works learn DAGs without explicitly constraining the graph.
\citet{charpentier2022vidpdag} proposes \model{vp-dag}, an unconstrained and differentiable strategy to sample DAGs that can be easily integrated in probabilistic causal discovery approaches. Instead, we propose a more general optimization scheme for learning acyclic graphs that it is not immediately comparable.
In the context of causal discovery,
\enco~\citep{lippe2022enco} decouples a DAG in an adjacency matrix and an edge orientation matrix. However, the authors explicitly parameterize the orientation matrix and prove that it converges to an acyclic orientation whenever the training dataset contains a sufficient number of interventions. Our structure learning proposal tackles instead non-intervened datasets and ensures acyclicity by construction. 
Similarly to us, \nocurl~\citep{yu2021dags} proposes a model that decouples the topological ordering from the adjacencies of an acyclic graph. However, the proposed optimization schemes are significantly different. Firstly, their approach extracts the nodes ordering from a preliminary solution obtained by partially solving the \notears constrained optimization problem. Then, they fix the ordering and unconstrainedly optimize only the direct adjacency matrix. On the other hand, \cosmo jointly learns priorities and adjacencies avoiding entirely acyclicity evaluations.
We report further discussion on the theoretical comparison with \enco and \nocurl in Appendix~\ref{app:related} and we carefully empirically compare with \nocurl in Section~\ref{sec:experiments}.

\section{Background}\label{sec:background}

\paragraph{Graph Theory.} 
A \emph{directed graph}
is a pair~${\D = (V, A)}$
of vertices~${V = \{1, \dots, d\}}$
and
arcs between them ${A \subseteq V\times V}$.
A directed \emph{acyclic} graph (DAG)
is a directed graph
whose arcs
follow
a strict partial order
on the vertices.
In a DAG,
the \emph{parents}
of a vertex ${v\in V}$
are the set of incoming nodes
such that
${\pa(v) = \{u \in V \mid (u, v) \in A\}}$~\citep{bondy_graph_2008}.
We represent
a directed graph
as a binary adjacency matrix
${\mat{A}\in\binary^{d\times d}}$,
where ${\adj_{uv} \neq 0 \iff (u, v) \in A}$.
Similarly,
we define
a weighted adjacency matrix
as the real matrix
${\mat{W}\in{\real}^{d\times d}}$,
where ${\mat{W}_{uv} \neq 0 \iff (u, v) \in A}$.

\paragraph{Structure Learning.} 
A Structural Equation Model (SEM)
models
a data-generating process
as a set of functions ${f=\{f_1, \dots, f_d\}}$,
where
${f_v: \real^{\lvert \pa(v) \rvert} \to \real}$
for each variable~${v \in V}$ in the DAG~\citep{pearl2009causality}.
Given a class of functions~$\mat{F}$
and a loss~$\loss$,
\notears~\citep{zheng_learning_2020}
formalizes non-linear acyclic structure learning
through the following constrained optimization problem
\begin{align}
    \min_{f \in \mat{F}}
    \loss(f)
    \;\text{ s.t. }\;
    \trace(e^{W(f) \hadamard W(f)}) - d = 0,
\end{align}
where $W(f)\in\real^{d\times d}$
is the adjacency matrix representing
parent relations between variables in $f$.
In particular,
the constraint
equals zero if and only if
the adjacency matrix~$W(f)$
is acyclic.
The authors
propose to solve the problem
using the Augmented Lagrangian
method~\citep{nocedal1999numerical},
which in turn requires to solve
multiple unconstrained problems
and to compute the constraint
value at each optimization step.
Notably,
any causal interpretation
of the identified arcs
depends on several assumptions
on the function class~$\mat{F}$ and
the loss function~$\loss$~\citep{van2013l0,loh2014high},
which we do not explore in this work.

\section{Learning Acyclic Orientations with \cosmo}\label{sec:method}

In Subsection~\ref{subsec:smoothorient},
we propose to parameterize a weighted adjacency matrix
as a function of a direct matrix
and a smooth orientation matrix.
In this way, we effectively
express the discrete space
of DAGs in a continuous and differentiable manner.
Then,
in Subsection~\ref{subsec:learnsmoothorient},
we introduce \cosmo,
our unconstrained optimization approach
to learn acyclic DAGs.
Furthermore,
we prove
an upper bound
on the
acyclicity
of the smooth orientation matrix
that connects
our formulation to constrained approaches.
To ease the presentation,
we initially assume linear relations between variables.
By doing so,
the weighted adjacency matrix is the unique parameter of the problem.
However, as with previous structure learning approaches,
our proposal easily extends
to non-linear models
by jointly optimizing a non-linear model
and an adjacency matrix either weighting or masking variables dependencies.
We report one possible extension
of \cosmo to non-linear relations
in Appendix~\ref{subapp:nonlinear}.

\subsection{Smooth Acyclic Orientations}\label{subsec:smoothorient}

To continuously represent the space of DAGs
with ${d=|V|}$ nodes,
we introduce a priority vector ${\prt\in\real^d}$
on its vertices.
Consequently,
given the priority vector~$\prt$
and
a strictly positive threshold ${\varepsilon>0}$,
we define the following strict partial order
${\prec_{\prt, \varepsilon}}$
on the vertex set $V$
\begin{align}\label{eq:priority-order}
    \forall (u,v) \in V \times V\colon\quad
    u \prec_{\prt,\varepsilon} v
    \iff
    \prt_v - \prt_u \geq \varepsilon.
\end{align}
In other terms,
a vertex $u$ precedes another vertex $v$
if and only if the priority of $v$
is sufficiently larger
than the priority of the vertex $u$.
Notably,
with a zero threshold $\varepsilon=0$,
the relation would be symmetric and thus not a strict order.
On the other hand,
whenever $\varepsilon$ is strictly positive,
we can represent a subset of all strict partial orders
sufficient to express all possible DAGs.
\begin{lemma}\label{lemma:dag-2}
    Let $\mat{W}\in\real^{d\times d}$ be a real matrix.
    Then, for any $\varepsilon>0$,
    $\mat{W}$ is the weighted adjacency matrix of a DAG
    if and only if
    it exists a priority vector $\prt\in\real^d$
    and a real matrix $\mat{H}\in\real^{d\times d}$
    such that
    \begin{align}
        \mat{W} = \mat{H}
        \hp
        \mat{T}_{\prec_{\prt,\varepsilon}},
    \end{align}
    where $\mat{T}_{\prec_{\prt,\varepsilon}}\in\binary^{d\times d}$
    is a binary orientation matrix
    such that
    \begin{align}
        \mat{T}_{\prec_{\prt,\varepsilon}}[uv] =
        \begin{cases}
        1 &\mbox{if $u \prec_{\prt,\varepsilon} v$}\\
        0 &\mbox{otherwise,}
        \end{cases}
    \end{align}
    for any $u,v\in V$.
\end{lemma}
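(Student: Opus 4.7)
The plan is to prove both directions of the biconditional separately, handling the easier $(\Leftarrow)$ direction first. Assume $\mat{W} = \mat{H} \hp \mat{T}_{\prec_{\prt,\varepsilon}}$. Any nonzero entry of $\mat{W}$ at position $(u,v)$ must come from $\mat{T}_{\prec_{\prt,\varepsilon}}[uv] = 1$, which by definition of $\prec_{\prt,\varepsilon}$ in~\eqref{eq:priority-order} forces $\prt_v - \prt_u \geq \varepsilon > 0$. Hence the arc set encoded by $\mat{W}$ is contained in the strict partial order $\prec_{\prt,\varepsilon}$, and no directed cycle can exist: chaining the inequalities along a hypothetical cycle $v_1 \to v_2 \to \cdots \to v_k \to v_1$ would yield $\prt_{v_1} < \prt_{v_1}$, a contradiction. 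So $\mat{W}$ is the weighted adjacency matrix of a DAG.

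For the $(\Rightarrow)$ direction, the strategy is to extract a priority vector from a topological ordering of the DAG. Since $\mat{W}$ is acyclic, fix any bijection $\pi\colon V \to \{1, \dots, d\}$ such that $\mat{W}_{uv} \neq 0$ implies $\pi(u) < \pi(v)$. Set $\prt_v = \varepsilon \cdot \pi(v)$ for every $v \in V$ and take $\mat{H} = \mat{W}$. For any arc $(u,v)$ we have $\prt_v - \prt_u = \varepsilon(\pi(v) - \pi(u)) \geq \varepsilon$, so $\mat{T}_{\prec_{\prt,\varepsilon}}[uv] = 1$ and the Hadamard product recovers $(\mat{H} \hp \mat{T}_{\prec_{\prt,\varepsilon}})[uv] = \mat{W}_{uv}$. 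For any non-arc pair, $\mat{W}_{uv} = 0$ already, so the identity $(\mat{H} \hp \mat{T}_{\prec_{\prt,\varepsilon}})[uv] = 0$ holds regardless of the entry of $\mat{T}_{\prec_{\prt,\varepsilon}}$.

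The step I expect to take the most care is choosing priorities that simultaneously respect the $\varepsilon$-margin at every arc and remain globally consistent. Integer topological positions rescaled by $\varepsilon$ handle both points at once, since consecutive positions differ by exactly one. A minor subtlety worth flagging in the write-up is that $\mat{T}_{\prec_{\prt,\varepsilon}}$ typically contains entries equal to $1$ at forward pairs of the topological order that are not arcs of $\mat{W}$; these are harmlessly masked by the Hadamard product with $\mat{H} = \mat{W}$, which is already zero there. No other case analysis is needed, so the argument is complete.
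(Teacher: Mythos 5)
Your proof is correct and follows essentially the same route as the paper's: for the forward direction both choose $\prt_u = \varepsilon\,\pi(u)$ from a topological ordering $\pi$ and take $\mat{H}=\mat{W}$, and for the reverse direction both observe that every surviving arc respects $\prec_{\prt,\varepsilon}$, hence no cycle can exist (you make this explicit by chaining the inequalities $\prt_{v_{i+1}}-\prt_{v_i}\geq\varepsilon$ around a hypothetical cycle, where the paper simply appeals to $\prec_{\prt,\varepsilon}$ being a strict partial order). No gaps.
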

\begin{proof}
    We report the proof in Appendix~\ref{proof:orientationmatrixpriority}.
\end{proof}

While priority vectors enable the representation of strict partial orders in a continuous space, the construction of the orientation matrix still requires the non-differentiable evaluation of the inequality between priority differences from Equation~\ref{eq:priority-order}.
To this end,
we approximate the comparison
of the difference against the threshold $\varepsilon$,
using a \emph{tempered} sigmoidal function.
We refer to such approximation of the orientation matrix as the \emph{smooth} orientation matrix.
\begin{definition}[Smooth Orientation Matrix]\label{def:smooth_orientation_matrix}
    Let $\prt\in\real^d$ be a priority vector,
    $\varepsilon>0$ be a strictly positive threshold,
    and $t>0$ be a strictly positive temperature.
    Then, the \emph{smooth} orientation matrix
    of the strict partial order 
    $\prec_{\prt,\varepsilon}$
    is the real matrix
    $S_{t,\varepsilon}(\prt)\in\real^{d\times d}$
    such that,
    for any $u, v\in V$,
    it holds
    \begin{align}
        S_{t,\varepsilon}(\prt)_{uv}
        = \sigma_{t,\varepsilon}(\prt_v - \prt_u),
    \end{align}
    where $\sigma_{t,\varepsilon}$
    is the $\varepsilon$-centered
    tempered sigmoid,
    defined as
    \begin{align}
        \sigma_{t,\varepsilon}(x) = \frac{1}{1 + e^{-(x-\varepsilon)/t}}.
    \end{align}
\end{definition}

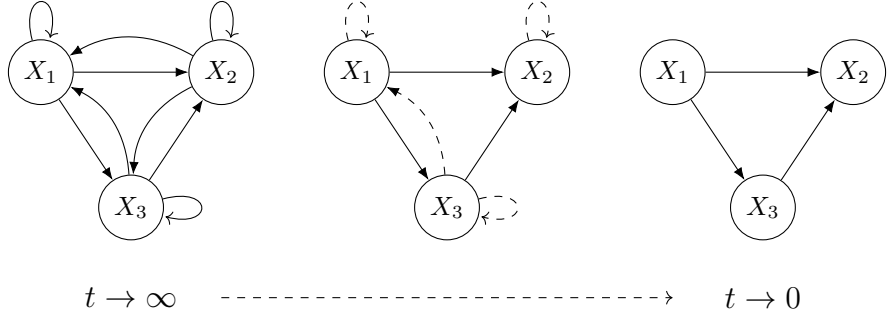
\begin{figure}
    \centering
    \def\distance{12mm}
\def\smalldistance{11mm}
\def\largedistance{22mm}
\def\minidistance{6mm}
\begin{tikzpicture}[node distance={\distance}, main/.style = {draw, circle}] 
  \node[main] (X1) {$X_1$}; 
  \node[main] (X2) [right of=X1,xshift=\distance]{$X_2$}; 
  \node[main] (X3) [below of=X1,xshift=\distance,yshift=-\minidistance]{$X_3$}; 
  \draw[-{Latex}] (X1) -- (X2);
  \draw[-{Latex}] (X1) -- (X3);
  \draw[-{Latex}] (X3) -- (X2);
  \draw[-{Latex}] (X2) to[bend right=30] (X1);
  \draw[-{Latex}] (X2) to[bend right=30] (X3);
  \draw[-{Latex}] (X3) to[bend right=30] (X1);
  \path (X1) edge [loop above] (X1);
  \path (X2) edge [loop above] (X2);
  \path (X3) edge [loop right] (X3);

  \node[main] (XMID1) [right of=X2,xshift=\minidistance]{$X_1$}; 
  \node[main] (XMID2) [right of=XMID1,xshift=\distance]{$X_2$}; 
  \node[main] (XMID3) [below of=XMID1,xshift=\distance,yshift=-\minidistance]{$X_3$}; 
  \draw[-{Latex}] (XMID1) -- (XMID2);
  \draw[-{Latex}] (XMID1) -- (XMID3);
  \draw[-{Latex}] (XMID3) -- (XMID2);
  \draw[-{Latex},dashed] (XMID3) to[bend right=30] (XMID1);
  \path (XMID1) edge [loop above,dashed] (XMID1);
  \path (XMID2) edge [loop above,dashed] (XMID2);
  \path (XMID3) edge [loop right,dashed] (XMID3);

  \node[main] (XDEF1) [right of=XMID2,xshift=\minidistance] {$X_1$}; 
  \node[main] (XDEF2) [right of=XDEF1,xshift=\distance]{$X_2$}; 
  \node[main] (XDEF3) [below of=XDEF1,xshift=\distance,yshift=-\minidistance]{$X_3$}; 
  \draw[-{Latex}] (XDEF1) -- (XDEF2);
  \draw[-{Latex}] (XDEF1) -- (XDEF3);
  \draw[-{Latex}] (XDEF3) -- (XDEF2);

  \node[draw=none] (TINF) [below of=X3] {\large$t \to \infty$};
  \node[draw=none] (TZERO) [below of=XDEF3] {\large$t \to 0$};
  \draw[->,dashed] (TINF) ($(TINF.center) - (-\distance, 0)$) -- ($(TZERO.center) - (\distance, 0)$);

\end{tikzpicture}
    \caption{
        (\emph{left})~With infinite temperature,
        the sigmoid function is constant
        and connects all vertices.
        (\emph{center})~Given two nodes,
        for positive temperatures
        the smooth orientation matrix
        has larger values on the arcs
        respecting the priority ordering.
        (\emph{right})~In the limit of the temperature to zero,
        the smooth orientation matrix contains non-zero entries
        if and only if the arc respects the order,
        \ie,
        it directs a node to another with sufficiently higher priority.
    }\label{fig:annealing}
\end{figure}

Intuitively,
the threshold $\varepsilon$
shifts the center of the sigmoid
and breaks the symmetry
whenever two variables \emph{approximately}
have the same priority.
The temperature parameter $t > 0$
regulates instead
the steepness of the sigmoid.
Because of the asymmetry introduced by the threshold,
in the limit of the temperature to zero,
the zero-entries of a smooth orientation matrix
coincide with the zero-entries of the corresponding orientation matrix~(Figure~\ref{fig:annealing}).
Therefore,
we prove that any directed acyclic graph
can be represented as the element-wise product
of a directed adjacency matrix and a smooth orientation.
Further,
any directed graph resulting
from this decomposition
is acyclic.
\begin{theorem}\label{theo:smooth_orientation}
    Let $\mat{W}\in\real^{d\times d}$ be a real matrix.
    Then, for any $\varepsilon>0$,
    $\mat{W}$ is the weighted adjacency matrix of a DAG
    if and only if
    it exists a priority vector $\prt\in\real^d$
    and a real matrix $\mat{H}\in\real^{d\times d}$
    such that
    \begin{align}
        \mat{W} = \mat{H}
        \hp
        \lim_{t \to 0}
        S_{t,\varepsilon}(\prt),
    \end{align}
    where $S_{t,\varepsilon}(\prt)$ is the
    smooth orientation matrix of $\prec_{\prt,\varepsilon}$.
\end{theorem}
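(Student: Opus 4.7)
The plan is to reduce the theorem to Lemma~\ref{lemma:dag-2} by computing the pointwise limit of the tempered sigmoid. For any $x\in\real$, as $t\to 0^+$ the argument $-(x-\varepsilon)/t$ diverges to $-\infty$ if $x>\varepsilon$ and to $+\infty$ if $x<\varepsilon$, giving
\begin{align*}
    \lim_{t\to 0^+}\sigma_{t,\varepsilon}(x)=
    \begin{cases} 1 & \text{if } x>\varepsilon, \\ 1/2 & \text{if } x=\varepsilon, \\ 0 & \text{if } x<\varepsilon. \end{cases}
\end{align*}
Applied entrywise, $\lim_{t\to 0}S_{t,\varepsilon}(\prt)$ is nonzero at $(u,v)$ exactly when $\prt_v-\prt_u\geq\varepsilon$, i.e., exactly when $u\prec_{\prt,\varepsilon}v$, and it coincides with the discrete orientation matrix $\mat{T}_{\prec_{\prt,\varepsilon}}$ whenever no priority difference equals $\varepsilon$.

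The $(\Leftarrow)$ direction is then immediate: if $\mat{W}=\mat{H}\hp\lim_{t\to 0}S_{t,\varepsilon}(\prt)$ and $\mat{W}_{uv}\neq 0$, the corresponding limit entry must be nonzero, so $u\prec_{\prt,\varepsilon}v$. Because $\prec_{\prt,\varepsilon}$ is a strict partial order whenever $\varepsilon>0$, the arc set of $\mat{W}$ respects a strict partial order and thus $\mat{W}$ is the weighted adjacency matrix of a DAG.

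For the $(\Rightarrow)$ direction, given a DAG with adjacency matrix $\mat{W}$, the plan is to construct a priority vector directly from any topological ordering $v_1,\dots,v_d$ by setting $\prt_{v_i}=2\varepsilon\cdot i$. Then all priority differences are integer multiples of $2\varepsilon$ and, in particular, never equal $\varepsilon$, so $\lim_{t\to 0}S_{t,\varepsilon}(\prt)=\mat{T}_{\prec_{\prt,\varepsilon}}$ entrywise. Since the topological ordering is consistent with $\mat{W}$, every arc $(u,v)$ of $\mat{W}$ satisfies $u\prec_{\prt,\varepsilon}v$, so setting $\mat{H}_{uv}=\mat{W}_{uv}$ on this partial order (and arbitrarily elsewhere, where $\mat{T}_{\prec_{\prt,\varepsilon}}$ vanishes) yields $\mat{W}=\mat{H}\hp\mat{T}_{\prec_{\prt,\varepsilon}}=\mat{H}\hp\lim_{t\to 0}S_{t,\varepsilon}(\prt)$, matching Lemma~\ref{lemma:dag-2}.

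The only delicate point is the degenerate boundary case $\prt_v-\prt_u=\varepsilon$, at which the sigmoid limits to $1/2$ rather than to $0$ or $1$; the construction of $\prt$ with uniform spacing $2\varepsilon$ above sidesteps this entirely, and the remainder of the argument is a routine reinterpretation of Lemma~\ref{lemma:dag-2}.
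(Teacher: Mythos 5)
Your proof is correct and follows essentially the same route as the paper's: compute the pointwise limit of the tempered sigmoid, observe that the nonzero pattern of $\lim_{t\to 0}S_{t,\varepsilon}(\prt)$ coincides with that of the discrete orientation matrix $\mat{T}_{\prec_{\prt,\varepsilon}}$, and reduce both directions to Lemma~\ref{lemma:dag-2}. The only difference is in handling the boundary value $\prt_v-\prt_u=\varepsilon$, where the limit is $1/2$: the paper keeps the spacing $\varepsilon$ from Lemma~\ref{lemma:dag-2} and compensates by rescaling the corresponding entries of $\mat{H}$ by a factor of two, whereas you space the priorities by $2\varepsilon$ so that the boundary case never arises --- a slightly cleaner variant of the same argument.
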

\begin{proof}
    We report the proof in Appendix~\ref{proof:smoothorienteddag}.
\end{proof}

\subsection{Learning Adjacencies and Orientations}\label{subsec:learnsmoothorient}

Given our definition of smooth acyclic orientation,
we can effectively parameterize the space of DAGs
as a continuous function
of a weighted adjacency matrix~$\mat{H}\in\real^{d \times d}$
and a priority vector~$\prt\in\real^d$.
Therefore,
the computational complexity
of our solution reduces
to the construction
of the adjacency matrix~${\mat{W} = \mat{H} \hp S_{t,\varepsilon}(\prt)}$,
which can be achieved in $O(d^2)$ time and space
per optimization step
by computing each arc as
${\mat{W}_{uv} = \mat{H}_{uv} \cdot \sigma((\prt_v-\prt_u-\varepsilon)/t)}$.
In the literature,
\nocurl proposed a slightly similar model
motivated by Hodge Theory~\citep{hodge1989theory}
where each arc is modeled as
${\mat{W}_{uv} = \mat{H}_{uv} \cdot \operatorname{ReLU}(\prt_v-\prt_u)}$.
To avoid a significant performance drop,
their formulation requires a preliminary solution
from a constrained optimization problem
and does not jointly learn
the parameters corresponding
to our adjacencies and priorities.
In the following,
we describe how \cosmo
effectively
reduces to an unconstrained problem
and avoids evaluating acyclicity
altogether.

\paragraph{Temperature Annealing.}
The smooth orientation matrix~$S_{t,\varepsilon}(\prt)$
represents an acyclic orientation
only in the limit
of the temperature~$t\to 0$.
Nonetheless,
the gradient loss
vanishes whenever
the temperature tends to zero.
In fact,
for an arbitrary loss function $\loss$,
we can decompose the gradient
of each component~$\prt_u$ of the priority vector
as follows
\begin{align}\label{eq:chain}
    \frac{\partial \loss(\mat{W})}{\partial \prt_u}
    &=
    \sum_{v\in V}
    \frac{\partial \loss(\mat{W})}
    {\partial \mat{W}_{uv}}
    \cdot
    \frac{\partial \mat{W}_{uv}}{\partial \prt_u} +
    \frac{\partial
    \loss(\mat{W})
    }
    {\partial \mat{W}_{vu}}
    \cdot
    \frac{\partial \mat{W}_{vu}}{\partial \prt_u},\\
    \frac{\partial \mat{W}_{uv}}{\partial \prt_u}
    &=
    -\frac{\mat{H}_{uv}}{t} \sigma_{t,\varepsilon}(\prt_v - \prt_u)
    (1-\sigma_{t,\varepsilon}(\prt_v - \prt_u))\label{eq:prtder_one}\\
    \frac{\partial \mat{W}_{vu}}{\partial \prt_u}
    &=
    \frac{\mat{H}_{vu}}{t} \sigma_{t,\varepsilon}(\prt_v - \prt_u)
    (1-\sigma_{t,\varepsilon}(\prt_v - \prt_u)).\label{eq:prtder_two}
\end{align}
Therefore,
by property of the sigmoidal function~$\sigma_{t,\varepsilon}$
it holds that
both ${\partial \mat{W}_{vu} / \partial \prt_u}$
and ${\partial \mat{W}_{vu} / \partial \prt_u}$
tend to zero for ${t \to 0}$.
To handle this issue,
we tackle
the optimization problem
by progressively reducing
the temperature during training.
In practice,
we perform cosine annealing
from an initial positive value~$t_\text{start}$
to a significantly lower
target value~$t_\text{end}\approx 0$.
We further motivate our choice
by showing the existence
of an upper bound on the acyclicity
of the orientation matrix
that is a monotone increasing function
of the temperature.
Therefore,
temperature annealing
effectively decreases the acyclicity upper bound
during training
of the \emph{smooth} orientation
and, consequently, of the adjacencies.
\begin{theorem}\label{theo:smoothub}
    Let $\prt\in\real^d$ be a priority vector,
    $\varepsilon>0$ be a strictly positive threshold,
    and $t>0$ be a strictly positive temperature.
    Then, given the smooth orientation matrix
    $S_{t,\varepsilon}(\prt)\in\real^{d\times d}$,
    it holds
    \begin{align}
        h(S_{t,\varepsilon}(\prt)) \leq e^{d\alpha} - 1,
    \end{align}
    where
    $h(S_{t,\varepsilon}(\prt))=\trace(e^{S_{t,\varepsilon}(\prt)}) - d$
    is the \notears acyclicity constraint
    and
    $\alpha=\sigma(-\varepsilon/t)$.
\end{theorem}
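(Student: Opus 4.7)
The plan is to bound $\trace(e^{S_{t,\varepsilon}(\prt)})$ by expanding the matrix exponential as a power series and bounding each term $\trace(S_{t,\varepsilon}(\prt)^k)$ through a pointwise concavity argument on closed walks.

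First, I would write
\begin{align}
    h(S_{t,\varepsilon}(\prt))
    = \sum_{k=1}^{\infty} \frac{\trace(S_{t,\varepsilon}(\prt)^k)}{k!},
\end{align}
and expand each trace combinatorially as
\begin{align}
    \trace(S_{t,\varepsilon}(\prt)^k)
    = \sum_{(i_1,\dots,i_k)\in V^k}
    \prod_{j=1}^{k} S_{t,\varepsilon}(\prt)_{i_j i_{j+1}},
\end{align}
with cyclic indexing $i_{k+1}=i_1$. There are exactly $d^k$ such closed walks.

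The central step is to show that for every closed walk, $\prod_{j=1}^{k} \sigma_{t,\varepsilon}(\prt_{i_{j+1}}-\prt_{i_j}) \leq \alpha^k$. Setting $y_j = (\prt_{i_{j+1}}-\prt_{i_j}-\varepsilon)/t$, the cyclic telescoping of priorities gives $\sum_j y_j = -k\varepsilon/t$, so the average is exactly $\bar y = -\varepsilon/t$. Since $(\log \sigma)''(x) = -\sigma(x)(1-\sigma(x)) < 0$, the function $\log \sigma$ is concave, and Jensen's inequality yields
\begin{align}
    \frac{1}{k}\sum_{j=1}^{k} \log \sigma(y_j)
    \;\leq\; \log \sigma(\bar y)
    \;=\; \log \alpha,
\end{align}
hence $\prod_j \sigma(y_j) \leq \alpha^k$. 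The key conceptual point, which I expect to be the main obstacle to spot, is precisely this pairing of (i)~the cyclic-sum cancellation that fixes the mean of the shifted arguments to $-\varepsilon/t$ with (ii)~the log-concavity of the sigmoid; everything else is bookkeeping.

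Finally, I would sum: $\trace(S_{t,\varepsilon}(\prt)^k) \leq d^k \alpha^k$, and
\begin{align}
    h(S_{t,\varepsilon}(\prt))
    \;\leq\; \sum_{k=1}^{\infty} \frac{(d\alpha)^k}{k!}
    \;=\; e^{d\alpha} - 1,
\end{align}
which is the desired inequality. Monotonicity in $t$ then follows from monotonicity of $\alpha = \sigma(-\varepsilon/t)$ in $t$ (for $\varepsilon>0$, $\alpha$ is increasing in $t$), so annealing the temperature toward zero drives the upper bound to $e^0-1=0$, formally connecting \cosmo's annealing schedule to the \notears acyclicity constraint as claimed.
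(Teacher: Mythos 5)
Your proof is correct and follows essentially the same route as the paper's (Appendix~\ref{app:smoothconstraint}): expand the matrix exponential into traces of powers, expand each trace over closed walks, use the telescoping of priorities around a cycle to pin down the sum of the sigmoid arguments, bound each cycle product by $\alpha^k$, and resum the series to $e^{d\alpha}-1$. The only divergence is in the central lemma: where the paper explicitly minimizes the convex function $\sum_i \log(1+e^{x_i-\varepsilon})$ over the zero-sum hyperplane via a stationarity argument (Lemma~\ref{lemma:sigmoidprod}), you obtain the identical bound in one line from Jensen's inequality applied to the concave function $\log\sigma$ --- a cleaner derivation of the same inequality.
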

\begin{proof}
    We report the proof in Appendix~\ref{app:smoothconstraint}.
\end{proof}

\paragraph{Direct Matrix Regularization.}
To contrast the discovery
of spurious arcs
we perform feature selection
by applying L1 regularization
on the adjacency matrix~$\mat{H}$.
Further,
during the annealing procedure,
even if a vertex $u$ precedes $v$
in the partial order~${\prec_{\prt, \varepsilon}}$,
the weight of the opposite arc ${v \to u}$
in the smooth orientation matrix
will only be approximately zero.
Therefore,
sufficiently large values
of the weighted adjacency matrix $\mat{H}$,
might still lead
to undesirable cyclic paths
during training.
To avoid this issue,
we regularize the L2-norm
of the non-oriented adjacency matrix.

\paragraph{Priority Vector Regularization.}
Other than for small temperature values,
the partial derivatives
in Equations~\ref{eq:prtder_one} and \ref{eq:prtder_two}
tend to zero
whenever
the priorities distances~$|\prt_v - \prt_u|$
tend to infinity.
Therefore,
we regularize the L2-norm
of the priority vector.
For the same reason,
we initialize each component
from the normal distribution~${\prt_u\sim\mathcal{N}(0,\varepsilon^2/2)}$,
so that
each difference follows
the normal distribution~${\prt_v - \prt_u\sim\mathcal{N}(0,\varepsilon^2)}$.
We provide further details on initialization in Appendix~\ref{proof:pd}.

\paragraph{Optimization Problem.}
We formalize \cosmo as the differentiable and unconstrained problem
\begin{align}\label{eq:cosmo}
    \min_{\mat{H} \in \real^{d\times d}, \mat{p}\in\real^d}
    \loss(\mat{H} \circ S_{t,\varepsilon}(\prt))
    + \lambda_1\|\mat{H}\|_1
    + \lambda_2\|\mat{H}\|_2
    + \lambda_p\|\mat{p}\|_2,
\end{align}
where $\lambda_1,\lambda_2,\lambda_p$
are the regularization coefficients
for the adjacencies and the priorities.
As the regularization coefficients~$\mat{\lambda}=\{\lambda_1,\lambda_2,\lambda_p\}$,
the initial temperature~$t_\text{start}$,
the target temperature~$t_\text{end}$,
and the shift~$\varepsilon$
are hyperparameters
of our proposal.
Nonetheless,
Theorem~\ref{theo:smoothub}
can guide
the choice
of the final temperature value
and the shift
to guarantee
a maximum tolerance
on the acyclicity
of the smooth orientation matrix.

\section{Experiments}\label{sec:experiments}

We present an experimental comparison of \cosmo against related acyclic structure learning approaches. Our method operates on possibly full-rank graphs and ensures the exact acyclicity of the solution. Therefore, we focus on algorithms providing the same guarantees and under the same conditions. Namely, we confront with the structure learning performance and execution time of \notears~\citep{zheng2018notears}, \nocurl~\citep{yu2021dags}, and \dagma~\citep{bello2022dagma}. As previously discussed, \nocurl proposes a similar model with a substantially different optimization scheme. To highlight the importance of both our parameterization and optimization scheme, we also compare with an entirely unconstrained variant of the algorithm where we directly train the variables ordering without any preliminary solution. In the results, we refer to this variant as \nocurljoint.

We base our empirical analysis on the testbed originally introduced by \citet{zheng2018notears} and then adopted as a benchmark by all followup methods. In particular, we test continuous approaches on randomly generated Erdös-Rényi (ER) and scale-free (SF) graphs of increasing size and for different exogenous noise types.
For each method, we perform structure learning by minimizing the Mean Squared Error (MSE) of a model on a synthetic dataset using the Adam optimizer~\citep{kingma2014adam}.
In Appendix~\ref{app:implementation}, we report further details on the implementation of \cosmo, the baselines, and the datasets.

\subsection{Evaluation Overview}

In line with previous work, we retrieve the binary adjacency matrix by thresholding the learned weights against a small threshold~${\omega=0.3}$~\citep{zheng2018notears}.
While \cosmo guarantees the solution to be acyclic, we maintain the thresholding step to prune correctly oriented but spurious arcs.
Then, we measure the Normalized Hamming Distance~(\nhd) between the solution and the ground-truth as the sum of missing, extra, or incorrect edges divided by the number of nodes. In general, testing weights against a fixed threshold might limit the retrieval of significant arcs with small coefficients in the true model~\citep{xu2022sparse}. For this reason, we also compute the Area under the ROC curve~(\auc), which describes the trade-off between the True Positive Rate~(\tpr) and the False Positive Rate~(\fpr) for increasing values of the weight threshold~\citep{heinze2018causal}. Due to space limitations, we only report in the main body the \auc results, which is the most comprehensive score. Detailed results for other metrics, including \nhd, are provided in the Appendices. 

\subsection{Results Discussion}

\begin{table}[t]
    \caption{%
        Experimental results on linear ER-4 acyclic graphs with different noise terms and sizes. For each algorithm, we report mean and standard deviation over five independent runs of the \auc metric and the time in seconds. We highlight in bold the \best{best} result and in italic bold the \rest{second best} result. The reported duration of \nocurl includes the time to retrieve the necessary preliminary solution using an acyclicity constraint. We denote as \nocurljoint the quadratic version of \nocurl. Complete results on additional metrics and graph types are in Appendix~\ref{app:additionalresults}.
    }
    \label{tab:linearcomparison}
    \begin{center}
        \resizebox{\textwidth}{!}{\begin{tabular}{clrr@{\ $\pm$\ }rrr@{\ $\pm$\ }rrr@{\ $\pm$\ }r}
    \toprule
     &
     &\multicolumn{3}{c}{Gauss}
     &\multicolumn{3}{c}{Exp}
     &\multicolumn{3}{c}{Gumbel}
     \\\cmidrule(lr){3-5}\cmidrule(lr){6-8}\cmidrule(lr){9-11}
     $d$ & Algorithm
     & \multicolumn{1}{c}{\auc} & \multicolumn{2}{c}{Time}
     & \multicolumn{1}{c}{\auc} & \multicolumn{2}{c}{Time}
     & \multicolumn{1}{c}{\auc} & \multicolumn{2}{c}{Time}
     \\
    \midrule
    \multirow{5}{*}{30} & \underline{\cosmo} &
    \rest{0.984 $\pm$ 0.02} & %
    \best{88} & \best{2} & %
    \best{0.989 $\pm$ 0.01} & %
    \best{89} & \best{3} & %
    0.914 $\pm$ 0.10 & %
    \best{87} & \best{2} \\ %
    
    & \dagma &
    \best{0.985 $\pm$ 0.01} & %
    781 & 192 &%
    \rest{0.986 $\pm$ 0.02} & %
    744 & 75 &%
    \rest{0.973 $\pm$ 0.02} & %
    787 & 86 \\ %
    
    & \nocurl &
    0.967 $\pm$ 0.01 & %
    822 & 15 &%
    0.956 $\pm$ 0.02 & %
    826 & 24 &%
    0.915 $\pm$ 0.04 & %
    826 & 17 \\ %
    
    & \nocurljoint &
    0.694 $\pm$ 0.06 & %
    \rest{226} & \rest{5} &%
    0.694 $\pm$ 0.05 & %
    \rest{212} & \rest{5} &%
    0.678 $\pm$ 0.05 & %
    \rest{212} & \rest{5} \\ %
    
    & \notears &
    0.973 $\pm$ 0.02 & %
    5193 & 170 &%
    0.966 $\pm$ 0.03 & %
    5579 & 284 &%
    \best{0.981 $\pm$ 0.01} & %
    5229 & 338 \\ %
    
    \midrule
    \multirow{5}{*}{100} & \underline{\cosmo} &
    0.961 $\pm$ 0.03 & %
    \best{99} & \best{2} & %
    \rest{0.985 $\pm$ 0.01} & %
    \best{99} & \best{2} & %
    \rest{0.973 $\pm$ 0.01} & %
    \best{98} & \best{1} \\ %
    
    & \dagma &
    \best{0.982 $\pm$ 0.01} & %
    660 & 141 & %
    \best{0.986 $\pm$ 0.01} & %
    733 & 109 & %
    \best{0.986 $\pm$ 0.01} & %
    858 & 101 \\ %
    
    & \nocurl &
    0.962 $\pm$ 0.01 & %
    1664 & 14 & %
    0.950 $\pm$ 0.02 & %
    1655 & 28 & %
    0.962 $\pm$ 0.01 & %
    1675 & 34 \\ %
    
    & \nocurljoint &
    0.682 $\pm$ 0.05 & %
    \rest{267} & \rest{10} & %
    0.693 $\pm$ 0.05 & %
    \rest{242} & \rest{4} & %
    0.663 $\pm$ 0.04 & %
    \rest{247} & \rest{9} \\ %
    
    & \notears &
    \rest{0.963 $\pm$ 0.01} & %
    11000 & 339& %
    0.972 $\pm$ 0.01 & %
    10880 & 366 & %
    0.969 $\pm$ 0.00 & %
    11889 & 343 \\ %
    
    \midrule

    \multirow{3}{*}{500} & \underline{\cosmo} &
    \rest{0.933 $\pm$ 0.01}& %
    \best{436}   & \best{81} & %
    \best{0.986 $\pm$ 0.00} & %
    \best{390} & \best{102} & %
    \best{0.982 $\pm$ 0.01} & %
    \best{410} & \best{106} \\ %
    
    & \dagma &
    \best{0.980 $\pm$ 0.00} & %
    2485 & 365 & %
    \rest{0.984 $\pm$ 0.01} & %
    2575 & 469 & %
    \rest{0.980 $\pm$ 0.00} & %
    2853 & 218 \\ %
    
    & \nocurljoint &
    0.683 $\pm$ 0.05& %
    \rest{1546} & \rest{304} & %
    0.715 $\pm$ 0.03 & %
    \rest{1488} & \rest{249} & %
    0.728 $\pm$ 0.05 & %
    \rest{1342} & \rest{209} \\ %
    
    \bottomrule
\end{tabular}
}
    \end{center}
\end{table}

By looking at the \auc of the learned graphs, we observe that \cosmo consistently achieves results that are comparable and competitive with those from constrained-optimization solutions such as \dagma or \notears across different graph sizes and noise types~(Table~\ref{tab:linearcomparison}). This empirically confirms the approximation properties of \cosmo, which can reliably discover DAGs without resorting to explicit acyclicity constraints. 

Furthermore, \cosmo performs better than \nocurl on most datasets. We recall that the latter is the only existing structure learning approach combining constrained and unconstrained optimization. As pointed out in \citet{yu2021dags}, we also observe that the discovery performance of \nocurl drops when optimizing the variable ordering instead of inferring it from a preliminary solution. The fact that \cosmo outperforms \nocurljoint on all datasets highlights
the substantial role and effect of our \emph{smooth} orientation formulation and our optimization scheme for learning the topological
\begin{wrapfigure}{r}{0.45\textwidth}
  \centering
  \resizebox{0.45\textwidth}{!}{\input{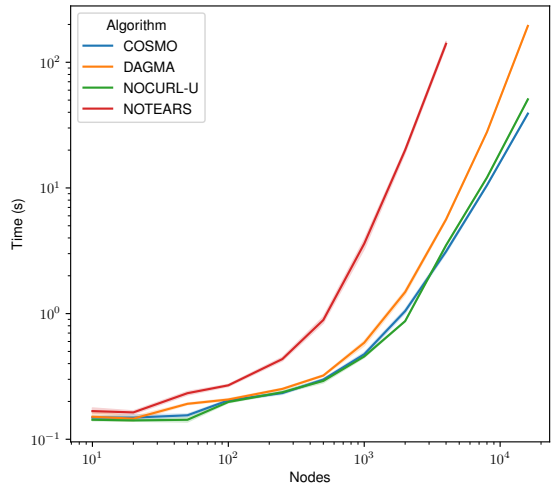}}
  \caption{%
  Average duration of a training epoch for an increasing number of nodes on five independent runs on random ER-4 DAGs.
  }\label{fig:epochtime}
\end{wrapfigure}
ordering of variables from data in an unconstrained way. Overall, our proposal achieves, on average, the best or the second-best result for the \auc metric across all the analyzed datasets and correctly classifies arcs also for large graphs (Figure~\ref{fig:confusion}). Further, as we extensively report in Appendix~\ref{app:additionalresults} for different graph size and noise terms, our non-linear extension obtains comparable performances with \model{dagma-mlp} in significantly less time.

\begin{figure}[t]
    \centering
    \includegraphics[width=\linewidth]{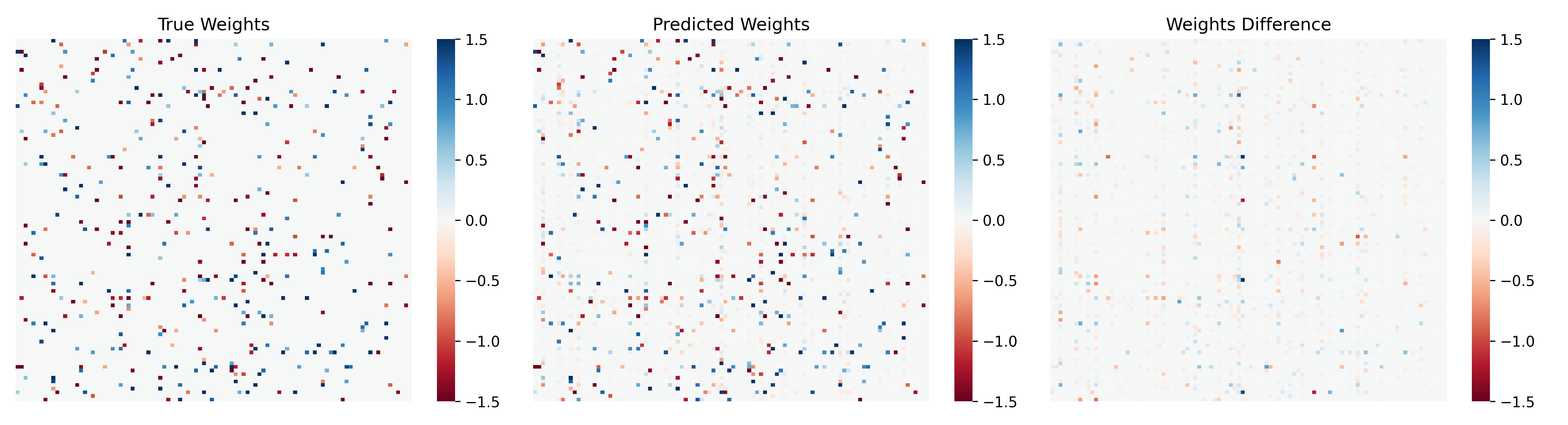}
    \caption{
        Visualization of the weighted adjacency matrix learned by \cosmo (ER4, Gaussian noise, 100 nodes) against the ground truth. We also report the difference between the ground-truth and the learned weights.
        By thresholding the learned weighted adjacency matrix, \cosmo correctly classifies most true (TPR = 0.96) and non-existing arcs (FPR = 0.01), resulting in a limited number of errors (NHD = 0.93) due to the narrow difference in the retrieved weights.
        }\label{fig:confusion}
\end{figure}

Unsurprisingly, due to its quadratic computational complexity, \cosmo is significantly faster than constrained methods on all datasets, especially for increasing graph sizes.
Notably, despite employing early stopping conditions for convergence, all competing methods incur in the cost of solving multiple optimization problems with higher computational cost per step~(Figure~\ref{fig:epochtime}).
In particular, while the unconstrained variant \nocurljoint has a comparable per-epoch average time cost,
for a substantially worse graph recovery performance,
\nocurl overall pays the need for a preliminary solution computed with an acyclicity constraint. Therefore, already for graphs with 500 nodes, only \cosmo, \dagma, and \nocurljoint return a solution before hitting our wall time limit.
Finally, we observe that the cubic computational complexity of \dagma significantly emerges when dealing with large graphs. Therefore, despite the effective underlying optimizations on the log-determinant computation, \dagma's acyclicity constraint still affects scalability.

\begin{table}[t]
    \caption{%
        Ablation test of the priority regularization term $\lambda_p$ on DAGs with different noise terms and sizes. We name the configuration without priority regularization as \model{cosmo-np} and report mean and standard deviation over five independent runs. For each configuration, the \best{best} result is in bold.
    }
    \label{tab:ablation}
    \begin{center}
        \begin{tabular}{cclrrr}
  \toprule
  Graph & $d$ & Algorithm & \nhd & \tpr & \auc\\
  
  \midrule
  
  \multirow{4}{*}{ER4} &
  \multirow{2}{*}{30} & \underline{\textsc{Cosmo}} &
  \best{0.867 $\pm$ 1.01} & %
  \best{0.953 $\pm$ 0.04} & %
  \best{0.984 $\pm$ 0.02} \\ %
  && \textsc{Cosmo-NP} &
  1.893 $\pm$ 0.92 & %
  0.870 $\pm$ 0.07 & %
  0.937 $\pm$ 0.05 \\ %
  
  \cmidrule{2-6}
  
  &
  \multirow{2}{*}{100} & \underline{\textsc{Cosmo}} &
  \best{1.388 $\pm$ 0.69} & %
  0.917 $\pm$ 0.04 & %
  0.961 $\pm$ 0.03 \\ %
  && \textsc{Cosmo-NP} &
  1.570 $\pm$ 0.56 & %
  \best{0.935 $\pm$ 0.04} & %
  \best{0.974 $\pm$ 0.02} \\ %
  
  \cmidrule{1-6}
  
  \multirow{4}{*}{ER6} &
  \multirow{2}{*}{30} & \underline{\textsc{Cosmo}} &
  \best{4.087 $\pm$ 1.12} & %
  \best{0.838 $\pm$ 0.06} & %
  \best{0.921 $\pm$ 0.04} \\ %
  && \textsc{Cosmo-NP} &
  4.153 $\pm$ 2.38 & %
  0.819 $\pm$ 0.12 & %
  0.885 $\pm$ 0.09 \\ %
  
  \cmidrule{2-6}
  
  &
  \multirow{2}{*}{100} & \underline{\textsc{Cosmo}} &
  \best{9.476 $\pm$ 3.01} & %
  0.771 $\pm$ 0.08 & %
  0.911 $\pm$ 0.05 \\ %
  && \textsc{Cosmo-NP} &
  9.804 $\pm$ 2.90 & %
  \best{0.848 $\pm$ 0.05} & %
  \best{0.941 $\pm$ 0.03} \\ %
  
  \bottomrule
\end{tabular}
    \end{center}
\end{table}

Given the proposed parameterization,
\cosmo requires particular care in the choice
of the regularization hyperparameters.
In particular, we carefully considered
the importance of regularizing the priority vector,
which constitutes one of our main differences with previous structure learning approaches.
We found that our hyperpameter validation procedure
consistently returned relatively low priority regularization values
($\lambda_p \approx$ 1e-3).
However,
while it might benefit
structure learning for larger graphs,
ablating priority regularization
results in a non-negligible performance drop
for smaller graphs~(Table~\ref{tab:ablation}).

\section{Conclusion}

We introduced \cosmo, an unconstrained and continuous approach for learning an acyclic graph from data. Our novel definition of \emph{smooth} orientation matrix ensures acyclicity of the solution without requiring the evaluation of computationally expensive constraints. Furthermore, we prove that annealing the temperature of our smooth acyclic orientation corresponds to decreasing an upper bound on the widely adopted acyclicity relaxation from \notears. Overall, our empirical analysis showed that \cosmo performs comparably to constrained methods in significantly less time. Notably, our proposal significantly outperforms the only existing work \emph{partially} optimizing in the space of DAGs, \nocurl, and its completely unconstrained variant \nocurljoint. Therefore, the analysis highlights the role of our parameterization, which does not incur the necessity of preliminary solutions and provably returns a DAG without ever evaluating acyclity.

In recent years, several authors debated using continuous acyclic learners as full-fledged causal discovery algorithms~\citep{reisach2021beware,kaiser2022unsuitability,ng2023structure}. In this context, our empirical analysis of \cosmo shares the same limitations of existing baselines and, exactly like them, might not be significant in the causal discovery scenario. However, acyclic optimization techniques are a fundamental component for continuous discovery approaches~\citep{brouillard2020differentiable,lorch_amortized_2022}. By reducing by an order of magnitude the necessary time to optimize an acyclic causal graph, \cosmo opens up more scalable continuous causal discovery strategies, without sacrificing --- as demonstrated in this work --- the theoretical guarantees on DAGs approximation capabilities.

\bibliography{bibliography}

\newpage
\appendix
\appendix
\section{Deferred Proofs}

\subsection{Proof of Lemma \ref{lemma:dag-2}}\label{proof:orientationmatrixpriority}

\paragraph{\cref{lemma:dag-2}}
\emph{Let $\mat{W}\in\real^{d\times d}$ be a real matrix.
Then, for any $\varepsilon>0$,
$\mat{W}$ is the weighted adjacency matrix of a DAG
if and only if
it exists a priority vector $\prt\in\real^d$
and a real matrix $\mat{H}\in\real^{d\times d}$
such that
\begin{align}
    \mat{W} = \mat{H}
    \hp
    \mat{T}_{\prec_{\prt,\varepsilon}},
\end{align}
where $\mat{T}_{\prec_{\prt,\varepsilon}}\in\binary^{d\times d}$
is a binary orientation matrix
such that
\begin{align}
    \mat{T}_{\prec_{\prt,\varepsilon}}[uv] =
    \begin{cases}
    1 &\mbox{if $u \prec_{\prt,\varepsilon} v$}\\
    0 &\mbox{otherwise,}
    \end{cases}
\end{align}
for any $u,v\in V$.}
    
\begin{proof}
    Firstly,
    we prove
    the existence of a priority vector~$\prt$
    and an adjacency matrix~$\mat{H}$
    for each weighted acyclic matrix~$\mat{W}$
    of a directed acyclic graph~$D=(V, A)$.
    Being a DAG,
    the arcs follow a strict partial order $\prec$
    on the vertices $V=\{1,\dots,d\}$.
    Therefore,
    it holds that
    \begin{align}
        A \subseteq \{(u,v)\mid u \prec v\}.
    \end{align}
    Consequently,
    for an arbitrary topological ordering
    of the variables~$\pi\colon V \to \{1,\dots,d\}$,
    which always exists on DAGs,
    we define the vector~$\prt\in\real^d$
    such that
    \begin{align}
        \prt_u=\varepsilon\pi(u).
    \end{align}
    Given
    the following implications
    \begin{align}
        u \prec v
        \implies
        &\pi(v) > \pi(u)\\
        \implies
        &\prt_v - \prt_u = \varepsilon(\pi(v) - \pi(u)) \geq \varepsilon\\
        \iff
        &u \prec_{\prt,\varepsilon} v,
    \end{align}
    it holds that
    the order $\prec_{\prt,\varepsilon}$
    contains the order $\prec$.
    Finally,
    we can
    define the adjacency matrix
    as $\mat{H}=\mat{W}$,
    where
    $
        \mat{W}=
        \mat{H}
        \hp
        \mat{T}_{\prec_{\prt,\varepsilon}}
    $
    holds since
    $\mat{T}_{\prec_{\prt,\varepsilon}}[u,v]=0$
    only if $(u,v)\not\in A$.

    To prove that
    any priority vector~$\prt$
    and adjacency matrix~$\hadj$
    represent a DAG,
    we first notice
    that,
    since the arcs follow a strict partial order,
    the orientation~$\porient$
    is acyclic.
    Then,
    by element-wise multiplying
    any matrix~$\hadj$
    we obtain a sub-graph
    of a DAG,
    which is acyclic by definition.
\end{proof}

\subsection{Proof of Theorem \ref{theo:smooth_orientation}}\label{proof:smoothorienteddag}

\paragraph{Theorem \ref{theo:smooth_orientation}}
\emph{
Let $\mat{W}\in\real^{d\times d}$ be a real matrix.
Then, for any $\varepsilon>0$,
$\mat{W}$ is the weighted adjacency matrix of a DAG
if and only if
it exists a priority vector $\prt\in\real^d$
and a real matrix $\mat{H}\in\real^{d\times d}$
such that
\begin{align}\label{eq:dagspace_v1}
    \mat{W} = \mat{H}
    \hp
    \lim_{t \to 0}
    S_{t,\varepsilon}(\prt),
\end{align}
where $S_{t,\varepsilon}(\prt)$ is the
smooth orientation matrix of $\prec_{\prt,\varepsilon}$.}
    
\begin{proof}
By \cref{lemma:dag-2},
we know that
for any acyclic weighted adjacency matrix~$\wadj$
there exist a priority vector~$\prt$
and a real matrix~$\hadj$
such that~$\wadj=\hadj\hp\porient$.
Further,
by \cref{def:smooth_orientation_matrix},
the inner limit of Equation \ref{eq:dagspace_v1} solves to  
\begin{align}\label{eq:limit}
    \lim_{t \to 0} S_{t,\varepsilon}(\prt)_{uv}=
\begin{cases}
1 &\prt_v - \prt_u > \varepsilon\\
1/2 &\prt_v - \prt_u = \varepsilon\\
0 &\prt_v - \prt_u < \varepsilon.
\end{cases}
\end{align}
Therefore,
we can define
$\mat{H}'\in\real^{d\times d}$
such that
\begin{align}
\mat{H}_{uv} = 
\begin{cases}
    2\mat{H}'_{uv} & \prt_v - \prt_u = \varepsilon,\\
    \mat{H}'_{uv} & \text{otherwise.}
\end{cases},
\end{align}
from which
\begin{align}
\wadj = \hadj\hp\porient = \hadj'\hp\lim_{t\to 0}\sorient.
\end{align}

Then,
to prove the counter-implication of \cref{theo:smooth_orientation},
we notice that
\begin{align}
    \lim_{t\to 0}\sorient_{uv} = 0
    \iff
    \prt_v - \prt_u < \varepsilon
    \iff
    u \not\prec_{\prt,\varepsilon} v.
\end{align}
Therefore,
since the smooth orientation
contains an arc
if and only if
the vertex respect
the strict partial order~$\prec_{\prt,\varepsilon}$,
it is acyclic.
Consequently,
as in \cref{lemma:dag-2},
the element-wise product
with an acyclic matrix
results in a sub-graph of a DAG,
which is also acyclic by definition.
\end{proof}

\subsection{Priority Vector Initialization}\label{proof:pd}

By independently sampling
each priority component
from a Normal distribution~$\mathcal{N}(\mu, s^2/2)$,
each difference
is consequently sampled from the distribution~$\mathcal{N}(0, s^2)$.
Therefore,
we seek a value for the standard deviation~$s$
that maximizes the partial derivative
\begin{align}
\frac{\partial \mat{W}_{uv}}{\partial \prt_u} = \frac{\mat{H}_{uv}}{t} \sigma_{t,\varepsilon}(\prt_v - \prt_u)
(1-\sigma_{t,\varepsilon}(\prt_v - \prt_u)).
\end{align}
for arbitrary vertices~$u, v$.
Given the definition of the tempered-shifted sigmoid function,
this object
has maximum
in~${\prt_v-\prt_u=\varepsilon}$.
Therefore,
by setting the variance as ${s^2 = \varepsilon^2}$,
we maximize the density function
of the point~${\prt_v-\prt_u=\varepsilon}$
in the distribution~$\mathcal{N}(0, s^2)$.

\section{Smooth Acyclic Orientations and the Acyclicity Constraint}\label{app:smoothconstraint}

In this section,
we present the proof
for the upper bound
on the acyclicity
of a smooth acyclic orientation matrix.
To this end, we introduce two auxiliary and novel lemmas.
Firstly,
we introduce a lemma
which binds the product of a sigmoid
on a sequence of values with zero sum~(Lemma~\ref{lemma:sigmoidprod}).
Then,
we introduce another lemma
on the sum of the priority differences
in a cyclic path~(Lemma~\ref{lemma:cyclediff}).
Finally,
we are able to prove
the acyclicity upper bound
from Theorem~\ref{theo:smoothub}.

\begin{lemma}{(Sigmoid Product Upper Bound)}\label{lemma:sigmoidprod}
    Let $\{x_i\}$ be a sequence of $n$ real numbers such that
    \[
    \sum_{i=1}^n x_i = 0.
    \]
    Then,
    for any temperature $t>0$
    and shift $\varepsilon\geq0$,
    it holds that
    \[
        \prod_{i=1}^n \sigma_{t,\varepsilon}(x_i) \leq \alpha^n,
    \]
    where $\alpha = \sigma(-\varepsilon/t)$
    is the value of the tempered and shifted sigmoid in zero.
\end{lemma}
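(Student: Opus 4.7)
The plan is to reduce the statement to a classical concavity argument on the log-sigmoid. Define $f(x) = \log \sigma_{t,\varepsilon}(x) = -\log\bigl(1 + e^{-(x-\varepsilon)/t}\bigr)$ and observe that $\sigma_{t,\varepsilon}(0) = 1/(1 + e^{\varepsilon/t}) = \sigma(-\varepsilon/t) = \alpha$. Taking logarithms of both sides, the claim becomes
\begin{align}
    \sum_{i=1}^n f(x_i) \;\leq\; n\, f(0)
    \quad\text{whenever}\quad
    \sum_{i=1}^n x_i = 0.
\end{align}
This is precisely Jensen's inequality applied to a concave function, so the entire proof reduces to verifying concavity of $f$.

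First I would compute the derivatives of $f$. A short calculation gives $f'(x) = \tfrac{1}{t}\bigl(1 - \sigma_{t,\varepsilon}(x)\bigr)$ and, differentiating once more, $f''(x) = -\tfrac{1}{t^2}\sigma_{t,\varepsilon}(x)\bigl(1 - \sigma_{t,\varepsilon}(x)\bigr)$. Since $t > 0$ and the sigmoid takes values strictly in $(0,1)$, we have $f''(x) < 0$ for every $x \in \real$, so $f$ is strictly concave on the whole real line (independently of the value of $\varepsilon \geq 0$, which only shifts the function horizontally).

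Applying Jensen's inequality to the concave $f$ with the uniform weights $1/n$ yields
\begin{align}
    \frac{1}{n}\sum_{i=1}^n f(x_i)
    \;\leq\;
    f\!\left(\frac{1}{n}\sum_{i=1}^n x_i\right)
    \;=\;
    f(0),
\end{align}
where the equality uses the constraint $\sum_i x_i = 0$. Exponentiating both sides gives $\prod_{i=1}^n \sigma_{t,\varepsilon}(x_i) \leq \sigma_{t,\varepsilon}(0)^n = \alpha^n$, as desired.

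There is no real obstacle here: the only calculation is the sign of $f''$, and the constraint $\sum x_i = 0$ is used exactly once, to evaluate the argument of $f$ on the right-hand side of Jensen's inequality. I would simply need to be careful to state the range of $\varepsilon$ correctly (the argument works uniformly for all $\varepsilon \geq 0$) and to note that equality in the bound is attained when all $x_i = 0$, which will be useful when combining this lemma with the cycle-sum lemma to derive the acyclicity upper bound in Theorem~\ref{theo:smoothub}.
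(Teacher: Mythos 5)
Your proof is correct, and it reaches the bound by a cleaner route than the paper's. Both arguments ultimately rest on the same fact --- that $\log\sigma_{t,\varepsilon}$ is concave (equivalently, that the softplus $x\mapsto\log(1+e^{(x-\varepsilon)/t})$ is convex) --- but the mechanics differ. The paper first factors the product as $e^{-n\varepsilon/t}/\prod_i(1+e^{(x_i-\varepsilon)/t})$, reduces the claim to a lower bound on the denominator, and then minimizes $T(x)=\sum_i\log(1+e^{(x_i-\varepsilon)/t})$ over the zero-sum constraint by eliminating one coordinate, locating the stationary point at $x=\vec{0}$, and appealing to convexity (which it asserts rather than verifies) to conclude that this stationary point is the global minimum. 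You instead take logarithms once, verify $f''(x)=-\tfrac{1}{t^2}\sigma_{t,\varepsilon}(x)(1-\sigma_{t,\varepsilon}(x))<0$ explicitly, and apply Jensen's inequality with uniform weights, using the constraint $\sum_i x_i=0$ exactly once to evaluate $f$ at the mean. Your version buys a shorter and fully rigorous argument (the concavity check replaces both the unproved convexity claim and the stationary-point computation), plus the equality characterization $x=\vec{0}$ for free from strict concavity; the paper's version makes the role of the shift $\varepsilon$ in the numerator $e^{-n\varepsilon/t}$ slightly more visible, which is cosmetically convenient when the bound is later combined with the cycle-sum lemma in the proof of Theorem~\ref{theo:smoothub}, but is otherwise equivalent.
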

\begin{proof}
    Before starting,
    we invite the reader to notice that,
    for any temperature~$t>0$,
    if the sum of the sequence~$\{x_i\}$
    is zero,
    then also the sequence~$\{x_i/t\}$
    sums to zero.
    Therefore,
    we omit the temperature in the following proof,
    and assume to divide beforehand all elements of the sequence
    by the temperature~$t$.
    Further,
    we explicitly
    denote the shifted sigmoid
    by using the notation~$\sigma(x_i-\varepsilon)$.

    Firstly,
    we formulate the left-side of the inequality as
    \begin{align*}
        \prod_{i=1}^n \sigma(x_i - \varepsilon)
        =
        \prod_{i=1}^n \frac{e^{x_i-\varepsilon}}{1+e^{x_i-\varepsilon}}
        =
        \frac{\prod_{i=1}^n e^{x_i-\varepsilon}}{\prod_{i=1}^n 1+e^{x_i-\varepsilon}}
        =
        \frac{e^{\sum_{i=1}^n x_i-\varepsilon}}{\prod_{i=1}^n 1+e^{x_i-\varepsilon}}
        =
        \frac{e^{-n\varepsilon}}{\prod_{i=1}^n 1+e^{x_i-\varepsilon}}.
    \end{align*}
    
    Similarly,
    we rewrite the right side as
    \begin{align*} 
        \alpha^n
        =
        \sigma(-\varepsilon)^n
        =
        \left(\frac{e^{-\varepsilon}}{\prod_{i=1}^n 1+e^{-\varepsilon}}\right)^n
        =
        \frac{e^{-n\varepsilon}}{(1+e^{-\varepsilon})^n}.
    \end{align*}

    Therefore,
    proving the left-side smaller or equal
    than the right-side,
    reduces to proving the left-denominator
    is larger than the right-denominator.
    Formally,
    \begin{align*}
        \prod_{i=1}^n 1+e^{x_i-\varepsilon}
        \geq
        \left(1+e^{-\varepsilon}\right)^n,
    \end{align*}
    or equivalently, by applying the logarithmic function,
    \begin{align}\label{ineq:target}
        \sum_{i=1}^n \log(1+e^{x_i-\varepsilon})
        \geq
        n \log(1+e^{-\varepsilon}).
    \end{align}

    To further ease the notation,
    we refer to the left side of inequality~\ref{ineq:target},
    as the target function
    \begin{align*}
        T(x) = \sum_{i=1}^n \log(1+e^{x_i-\varepsilon}).
    \end{align*}
    In particular,
    to prove \ref{ineq:target},
    we show that
    \begin{align}
        \min_{x} T(x)
        =
        n \log(1+e^{-\varepsilon}),
    \end{align}
    for $x = \Vec{0}$,
    which is the only stationary point
    due to the convexity of the target function.

    Without loss of generality,
    we derive the partial derivative
    of the component~$x_1$
    on the target function~$T(x)$.
    To constraint the components sum to zero,
    we consider the components~$\{x_i\}$
    for~$i>2$ as free,
    and then
    $x_2 = -x_1 -\sum_{i=3}^n x_i$
    as a function of the remaining.
    The choice of $x_1, x_2$ is independent from the components ordering,
    and thus applies to any possible pair.
    Consequently,
    \begin{align}
        \frac{\partial T(x)}{\partial x_1}
        &=
        \frac{\partial
        (\log(1+e^{x_1-\varepsilon}) +
        \log(1+e^{-x_1 -\sum_{i=3}^n x_i-\varepsilon}) + 
        \sum_{i=3}^n \log(1+e^{x_i-\varepsilon}))}{
        \partial x_1}\\
        &=
        \frac{\partial
        (\log(1+e^{x_1-\varepsilon}) +
        \log(1+e^{-x_1 -\sum_{i=3}^n x_i-\varepsilon})
        }{
        \partial x_1}\\
        &=
        \sigma(x_1 - \varepsilon)
        -
        \sigma(-x_1 -\sum_{i=3}^n x_i-\varepsilon).
    \end{align}
    Since~$\sigma(-\varepsilon) = \sigma(-\varepsilon)$,
    the equation is satisfied,
    for any component $x_i$,
    by $x=\vec{0}$,

    We finally prove Inequality~\ref{ineq:target},
    by showing that the value of the target function~$T(x)$,
    in its only stationary point $x=\vec{0}$,
    equals the bound.
    Formally,
    \begin{align}
        T(\vec{0})
        &= \sum_{i=1}^n \log(1+e^{-\varepsilon})\\
        &= n \log(1+e^{-\varepsilon}).
    \end{align}
    
\end{proof}

\begin{lemma}{(Sum of Differences in Cycle)}\label{lemma:cyclediff}
    Let $\{p_i\}$ be a sequence
    of $n+1$
    real numbers such that
    ${p_1 = p_{n+1}}$.
    Then, let
    $\{\delta_i\}$
    be a sequence
    of $n$ numbers such that
    $\delta_i = p_{i+1} - p_{i}$.
    Then,
    \begin{align}
        \sum_{i=1}^n \delta_i = 0.
    \end{align}
\end{lemma}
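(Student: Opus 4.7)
The plan is to recognize this as a textbook telescoping sum and then invoke the cyclic boundary condition $p_1 = p_{n+1}$ to collapse it to zero. First I would substitute the definition $\delta_i = p_{i+1} - p_i$ directly into the sum, yielding
\begin{align}
    \sum_{i=1}^n \delta_i
    = \sum_{i=1}^n (p_{i+1} - p_i).
\end{align}
Next I would observe that when this sum is expanded, every intermediate term $p_k$ for $2 \leq k \leq n$ appears exactly twice, once with a positive sign (as $p_{i+1}$ at index $i = k-1$) and once with a negative sign (as $p_i$ at index $i = k$), so all interior terms cancel and only the endpoints survive:
\begin{align}
    \sum_{i=1}^n (p_{i+1} - p_i) = p_{n+1} - p_1.
\end{align}
Finally I would apply the hypothesis $p_1 = p_{n+1}$ to conclude that the right-hand side vanishes, completing the proof.

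There is no real obstacle here — the statement is essentially the fundamental theorem of finite differences specialized to a closed loop. The only care needed is to make the cancellation argument tidy; a one-line induction on $n$ would suffice as an alternative if one preferred to avoid the informal ``expand and cancel'' phrasing, but the telescoping identity is standard enough that writing it out in two equalities is the cleanest presentation. This lemma will then be combined with Lemma~\ref{lemma:sigmoidprod} in the main argument for Theorem~\ref{theo:smoothub}, since the priority differences along any directed cycle of the smooth orientation matrix form exactly such a sequence summing to zero, enabling the uniform bound $\alpha^n$ on the product of sigmoids along the cycle.
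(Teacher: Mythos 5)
Your proof is correct and is essentially the same as the paper's: both simply telescope the sum $\sum_{i=1}^n (p_{i+1}-p_i)$ down to $p_{n+1}-p_1$ and invoke the cyclic condition $p_1 = p_{n+1}$. No further comment is needed.
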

\begin{proof}
    The proof
    is immediate
    from the following sequence
    of equations:
    \begin{align*}
        \sum_{i=1}^n \delta_i
        =
        \sum_{i=1}^n p_{i+1} - p_i
        =
        -p_1 + \sum_{i=2}^n p_{i} - p_i + p_{n+1}
        = 0.
    \end{align*}
\end{proof}

\begin{theorem}{(Orientation Acyclicity Upper Bound)}\label{theo:smoothubapp}
    Let $\mat{W}\in\real^{d\times d}$ be a real matrix.
    Then, for any $\varepsilon>0$,
    $\mat{W}$ is the weighted adjacency matrix of a DAG
    if and only if
    it exists a priority vector $\prt\in\real^d$
    and a real matrix $\mat{H}\in\real^{d\times d}$
    such that
    \begin{align}\label{ineq:upperbound}
        \mat{W} = \mat{H}
        \hp
        \lim_{t \to 0}
        S_{t,\varepsilon}(\prt),
    \end{align}
    where $S_{t,\varepsilon}(\prt)$ is the
    smooth orientation matrix of $\prec_{\prt,\varepsilon}$.
\end{theorem}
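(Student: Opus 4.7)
The plan is to derive this theorem as an immediate corollary of Lemma~\ref{lemma:dag-2} by showing that $\lim_{t\to 0} S_{t,\varepsilon}(\prt)$ and $\mat{T}_{\prec_{\prt,\varepsilon}}$ have the same zero/nonzero pattern, up to an ambiguous boundary value. The key observation is that the tempered sigmoid $\sigma_{t,\varepsilon}(x) = 1/(1 + e^{-(x-\varepsilon)/t})$ satisfies
\begin{align*}
    \lim_{t\to 0}\sigma_{t,\varepsilon}(x) =
    \begin{cases}
    1 & \text{if } x > \varepsilon,\\
    1/2 & \text{if } x = \varepsilon,\\
    0 & \text{if } x < \varepsilon,
    \end{cases}
\end{align*}
so $\lim_{t\to 0}S_{t,\varepsilon}(\prt)_{uv}$ agrees with $\mat{T}_{\prec_{\prt,\varepsilon}}[uv]$ everywhere except precisely on the set where $\prt_v-\prt_u=\varepsilon$, where it is $1/2$ rather than $1$.

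For the forward direction, I would start with a weighted adjacency matrix $\mat{W}$ of a DAG and invoke Lemma~\ref{lemma:dag-2} to obtain a priority vector $\prt$ and a matrix $\mat{H}'$ with $\mat{W} = \mat{H}' \hp \mat{T}_{\prec_{\prt,\varepsilon}}$. To absorb the $1/2$ factor coming from the boundary case, I would then define $\mat{H}$ entry-wise by
\begin{align*}
    \mat{H}_{uv} =
    \begin{cases}
    2\mat{H}'_{uv} & \text{if } \prt_v - \prt_u = \varepsilon,\\
    \mat{H}'_{uv} & \text{otherwise},
    \end{cases}
\end{align*}
so that $\mat{H}_{uv}\cdot\lim_{t\to 0}S_{t,\varepsilon}(\prt)_{uv} = \mat{H}'_{uv}\cdot\mat{T}_{\prec_{\prt,\varepsilon}}[uv]$ holds entry-wise, giving the desired decomposition.

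For the reverse direction, suppose $\mat{W} = \mat{H}\hp\lim_{t\to 0}S_{t,\varepsilon}(\prt)$. From the case analysis of the limit, $\lim_{t\to 0}S_{t,\varepsilon}(\prt)_{uv} = 0$ whenever $\prt_v - \prt_u < \varepsilon$, i.e.\ whenever $u \not\prec_{\prt,\varepsilon} v$. Hence the support of $\mat{W}$ is contained in the arc set of the strict partial order $\prec_{\prt,\varepsilon}$, which is acyclic by virtue of being a strict partial order. A Hadamard product that only retains a subset of arcs of a DAG yields a subgraph of a DAG, hence a DAG, so $\mat{W}$ is acyclic.

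The main subtlety is the treatment of the boundary case $\prt_v - \prt_u = \varepsilon$: one might naively think the $1/2$ prevents exact recovery of $\mat{W}$, but this is easily fixed by the rescaling of $\mat{H}$ on the (typically measure-zero) boundary. Apart from that, the proof is essentially a transfer from Lemma~\ref{lemma:dag-2} using the pointwise limit of the tempered sigmoid, so no further technical obstacles arise.
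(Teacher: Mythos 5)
Your proof is correct and follows essentially the same route as the paper's own proof of this claim (Appendix~\ref{proof:smoothorienteddag}): invoke Lemma~\ref{lemma:dag-2}, absorb the boundary value $1/2$ at $\prt_v-\prt_u=\varepsilon$ by doubling the corresponding entries of $\mat{H}$, and for the converse observe that the limit vanishes exactly where $u\not\prec_{\prt,\varepsilon}v$, so $\mat{W}$ is a subgraph of an acyclic orientation. Note only that the theorem as printed in the appendix carries the heading ``Orientation Acyclicity Upper Bound'' while its body restates Theorem~\ref{theo:smooth_orientation}; the statement you were given is the representation result, and your argument matches the paper's proof of that result.
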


\begin{proof}

The left side
of Inequality~\ref{ineq:upperbound}
corresponds
to the following infinite series
\begin{align*}
    \trace(e^\Prt) - d 
    &= \sum_{k=0}^\infty \frac{1}{k!} \trace(\Prt^{(k)}) - d \\
    &= \sum_{k=1}^\infty \frac{1}{k!} \trace(\Prt^{(k)})
\end{align*}
where $\Prt^{(k)}$ is the matrix power defined as $\Prt^{(k)} = \Prt^{(k-1)}\Prt$ and $\Prt^0=\mat{I}$.

By definition of matrix power,
the $u$-th element on the diagonal
of $\Prt^{(k)}$ equals to
\begin{align*}
    \Prt_{uu}^{(k)}
    &=
    \sum_{v_1 \in V} \Prt_{{v_1},u}^{(k-1)}\Prt_{u,{v_1}}\\
    &=
    \sum_{v_1 \in V}
    \cdots
    \sum_{v_{k-1} \in V}
    \Prt_{u,{v_1}}
    \left(\prod_{i=1}^{k-2} \Prt_{v_i,v_{i+1}}\right)
    \Prt_{{v_{k-1}},u}.
\end{align*}
Intuitively,
the $u$-th element on the diagonal
of $\Prt^{(k)}$
amounts to the sum
of all possible paths
starting and ending in the variable $X_u$.
Therefore,
being the same node,
the priority
of the first
and the last node
in the path
are equal by definition.
Consequently,
by Lemma~\ref{lemma:cyclediff},
the difference between the priorities
sums to zero.
For this reason,
given Lemma~\ref{lemma:sigmoidprod},
it holds that the product
of the corresponding sigmoids
is smaller or equal than $\alpha^k$.
Therefore,
\begin{align*}
    \Prt_{uu}^{(k)}
    &=
    \sum_{v_1 \in V}
    \cdots
    \sum_{v_{k-1} \in V}
    \Prt_{u,{v_1}}
    \left(\prod_{i=1}^{k-2} \Prt_{v_i,v_{i+1}}\right)
    \Prt_{{v_{k-1}},u}\\
    &\leq
    \sum_{v_1 \in V}
    \cdots
    \sum_{v_{k-1} \in V}
    \alpha^k\\
    &= d^{k-1}\alpha^k.
\end{align*}
Consequently,
we upper bound
the trace
of the orientation matrix power
as
\begin{align*}
    \trace(\Prt^{(k)})
    = \sum_{u=1}^d \Prt^{(k)}_{uu}
    \leq d^k\alpha^k.
\end{align*}

Finally,
we are able to prove the Theorem as
\begin{align*}
    \trace(e^\Prt) - d 
    &= \sum_{k=0}^\infty \frac{1}{k!} \trace(\Prt^{(k)}) - d \\
    &= \sum_{k=1}^\infty \frac{1}{k!} \trace(\Prt^{(k)}) \\
    &\leq \sum_{k=1}^\infty \frac{1}{k!} d^k\alpha^k \\
    &= -1 + e^{d\alpha},
\end{align*}
where the last passage is due to the Taylor series of the exponential function.
\end{proof}
\section{Implementation Details}\label{app:implementation}

In this section, we discuss the significant aspects of our implementation. We run all the experiments on our internal cluster of Intel(R) Xeon(R) Gold 5120 processors, totaling 56 CPUs per machine. We report details on the evaluation~(\ref{subapp:evaluation}), the data generation procedure~(\ref{subapp:data}), and the models~(\ref{subapp:models}).

\subsection{Evaluation Procedure}\label{subapp:evaluation}

We ensure a fair comparison by selecting the best hyperparameters for each implemented method on each dataset. We describe the hyperparameter space for each algorithm in the following subsections. Firstly, we sample fifty random configurations from the hyperparameter space. Since the hyperparameter space of \cosmo also includes temperature and shift values, we extract more hyperparameters (200 -- 800). Due to the significant speedup of \cosmo, hyperparameter searches take a comparable amount of time, with \notears being significantly longer on small graphs as well. Then, we test each configuration on five randomly sampled DAGs. We select the best hyperparameters according to the average \auc value. Finally, we perform a validation step by running the best configuration on five new random graphs.

To report the duration of each method, we track the time difference from the start of the fitting procedure up to the evaluation process, excluded. For constrained methods, this includes all the necessary adjustments between different problems. For \cosmo, it contains the annealing of the temperature between training epochs.

Following previous work,
we recover the binary adjacency matrix~$\mat{A}$
of the retrieved graph
by thresholding the learned weights~$\mat{W}$
with a small constant $\omega = 0.3$.
Formally, $\mat{A} = |\mat{W}| > \omega$.

\subsection{Synthetic Data}\label{subapp:data}

As we remarked in the main body, continuous approaches are particularly susceptible to data normalization and might exploit variance ordering between variables~\citep{reisach2021beware}. Therefore, empirical results on simulated datasets that do not explicitly control this condition might not generalize to real-world scenarios. Nonetheless, our proposal aims at defining a faster parameterization that could replace existing continuous approaches as a building block in more complex discovery solutions. Therefore, as initially done by \citet{zheng2018notears} and subsequent work, we tested \cosmo and the remaining baselines in the usual synthetic testbed without normalizing the variance.

We include in our code the exact data generation process from the original implementation of \notears.\footnote{%
    \notears implementation
    is published with Apache license
    at \url{https://github.com/xunzheng/notears}.
}
Therefore, the dataset generation procedure firstly produces a DAG with either the Erdős–Rényi (ER) or the scale-free Barabási-Albert (SF) models. Then, it samples 1000 independent observations. In the linear case, the generator simulates equations of the form
\begin{align}
    f_i(x) = \mat{W}^\top_i x + z_i,
\end{align}
where we sample
each weight~$\mat{W}_{ij}$
from the uniform distribution~$\mathcal{U}(-2,-0.5) \cup (0.5, 2)$
and each noise term~$z_i$
from either
the Normal,
Exponential ($\lambda=1$),
or Gumbel ($\mu=0,\beta=1$)
distributions.
In the non-linear case,
we simulate
an additive noise model
with form
\begin{align}
    f_i(x) = g_i(x) + z_i,
\end{align}
where $g_i$ is a randomly initialized
Multilayer Perceptron (MLP)
with 100 hidden units
and the noise term~$z_i$
is sampled from the Normal Distribution~$\mathcal{N}(0,1)$.

\subsection{Non-Linear Relations}\label{subapp:nonlinear}

While linearity
is a common assumption,
the interaction between variables
might be described
by more complex
and possibly non-linear models.
To this end,
we generalize
\cosmo
to represent
non-linear relations
between variables.
To ease the comparison,
we follow the non-linear design
of \model{notears-mlp}~\citep{zheng_learning_2020}
and \model{dagma-mlp}~\citep{bello2022dagma}.
However, it is worth mentioning that this is only one of the possible approaches for non-linear relations and that \cosmo parameterization could be easily extended to mask the input of a neural network instead of the weights, as done by \citet{ng2022masked} or \citet{brouillard2020differentiable}.
Similarly to \notears~\citep{zheng_learning_2020},
we model
the outcome
of each variable $X_u$
with a neural network~${f_u \colon \real^d \to \real}$,
where we distinguish between
the first-layer weights~$\mat{H}^u\in\real^{d\times h}$,
for $h$~distinct neurons,
and the remaining parameters~$\Phi_u$.
We ensure acyclicity
by considering
each weight matrix $\mat{H}^u$
as the $u$-th slice
on the first dimension
of a tensor~$\mat{H}\in\real^{d \times d \times h}$.
Intuitively,
each entry~$\mat{H}^u_{vi}$
represents the weight
from the variable~$X_u$
to the $i$-th neuron
in the first layer
of the MLP~$f_v$.
Then,
we broadcast
the element-wise multiplication
of a smooth orientation matrix
on the hidden dimensions.
Formally,
we model
the structural equation
of each variable~$X_u$
as
\begin{align}
f_u(x) = g_u(\varphi(x^\top \left[\mat{H}\hp S_{t,\varepsilon}(\prt)\right]^u) ; \Phi_u),
\end{align}
where 
$\varphi$ is an activation function
and
$g_u$
is
a Multilayer Perceptron (MLP)
with weights $\Phi_u$.
By applying each MLP~$f_u$
to each variable~$X_u$,
we define the overall SEM
as the function
${f\colon \real^d \to \real^d}$,
which depends
on the parameters~${\mat{\Phi}=\{\Phi_u\}}$,
on the weight tensor~$\mat{H}$,
and the priority vector~$\prt$.
Therefore,
we formalize the non-linear extension of \cosmo
as the following problem
\begin{align}\label{eq:cosmomlp}
    \min_{\mat{H} \in \real^{d\times d}, \mat{p}\in\real^d, \mat{\Phi}}
    \loss(\mat{H} \circ S_{t,\varepsilon}(\prt), \mat{\Phi})
    + \lambda_1\|\mat{H}\|_1
    + \lambda_2\|\mat{H}\|_2
    + \lambda_p\|\mat{p}\|_2.
\end{align}

\subsection{Models}\label{subapp:models}

Since we focus on the role of acyclic learners as a building block within more comprehensive discovery solutions, we slightly detach from experimental setups considering such algorithms as standalone structure learners. Therefore, instead of dealing with full-batch optimization, we perform mini-batch optimization with batch size $B=64$. Similarly, instead of explicitly computing the gradient of the loss function, we implement all methods in PyTorch to exploit automatic differentiation. By avoiding differentiation and other overhead sources, the time expenses results are not directly comparable between our implementations and the results reported in the original papers. However, our implementation choices are common to works that employed \notears \emph{et similia} to ensure the acyclicity of the solution~\citep{lachapelle_gradient-based_2020,brouillard2020differentiable,lopez2022dcdfg}.

By checking the convergence of the model, both \notears, \dagma, and \nocurl can dynamically stop the optimization procedure. On the other hand, \cosmo requires a fixed number of epochs in which to anneal the temperature value. For a fair comparison, while we stop optimization problems after a maximum of 5000 training iterations, we do not disable early-stopping conditions on the baselines. Therefore, when sufficiently large, the maximum number of epochs should not affect the overall execution time of the methods. For \cosmo, we interrupt the optimization after 2000 epochs. For the non-linear version of \dagma, we increased the maximum epochs to 7000. Overall, we interrupt the execution of an algorithm whenever it hits a wall time limit of 20000 seconds.

\begin{table}[t]
    \caption{Hyperparameter ranges and values for \cosmo.}\label{tab:ranges}
    \centering
    \begin{tabular}{lr}
    \toprule
        Hyperparameter & Range/Value \\
        \midrule
         Learning Rate & (1e-3, 1e-2)  \\
         $\lambda_1$ & (1e-4, 1e-3) \\
         $\lambda_2$ & (1e-3, 5e-3) \\
         $\lambda_p$ & (1e-3, 3e-3) \\
         $t_\text{start}$ & 0.45 \\
         $t_\text{end}$ & (5e-4, 1e-3) \\
         $\varepsilon$ & (5e-3, 2e-2)\\
         \bottomrule
    \end{tabular}
\end{table}

As previously discussed in Subsection \ref{subapp:evaluation}, we perform a hyperparameter search on each model for each dataset. In particular, we sampled the learning rate from the range (1e-4, 1e-2) and the regularization coefficients from the interval (1e-4, 1e-1). For the specific constrained optimization parameters, such as the number of problems or decay factors, we replicated the baseline parameters, for which we point the reader to the original papers or our implementation. For \cosmo, we sample hyperparameters from the ranges in Table~\ref{tab:ranges}, given our theoretical findings on the relation between acyclicity and temperature (Theorem~\ref{theo:smoothub}), we ensure sufficiently small acyclicity values. In the non-linear variant, we employ Multilayer Perceptrons with $h=10$ hidden units for each variable.

\section{Detailed Comparison with Related Works}\label{app:related}

\subsection{Comparison with \enco}

\citet{lippe2022enco} propose to learn a directed acyclic graph by jointly learning the probability of an arc being present and of the arc direction. The overall method, named \enco, defines the probability of a direct edge $X_u\to X_v$ as
\begin{align}\label{eq:lippe}
    \wadj_{uv} = \sigma(\mat{H}_{uv}) \cdot \sigma(\mat{P}_{uv}),
\end{align}
where
$\mat{H}\in\real^{d\times d}$
and
$\mat{P}\in\real^{d\times d}$
are free parameters
and $\sigma$ is the sigmoid function.
Although similar to our formulation, \enco defines arc orientations as a matrix that might be cyclic. In fact,
the matrix~$\mat{P}$
does not ensure the transitivity property
that an orientation matrix grants instead.
However,
the authors proved that,
in the limit of the number of samples
from the interventional distribution,
\enco will converge to a directed acyclic graph.
In comparison,
our formulation always
converges to a directed acyclic graph
and is thus adapt
to perform structure learning
in the observational context.

\subsection{Comparison with \nocurl}

\nocurl~\citep{yu2021dags} proposes
to parameterize a DAG
as a function of a $d$-dimensional vector
and a directed possibly cyclic graph.
By adopting our notation,
introduced in Section~\ref{sec:method},
we could report their decomposition as
\begin{align}
    \mat{W}_{uv} = \mat{H}_{uv} \cdot \relu(\prt_v - \prt_u),
\end{align}
for an arbitrary weight from node $X_u$ to node $X_v$.
Compared to our definition,
the use $\relu$ does not correspond
to the approximation of an orientation matrix.
In fact,
the distance between the priorities
directly affects the weight.
Instead,
by employing the shifted-tempered sigmoid
in the definition of smooth acyclic orientation,
in \cosmo%
the priorities only determine
whether an arc is present
between two variables.
Further,
\nocurl requires a preliminary solution
from which to extract the topological
ordering of the variables.
In turn,
such preliminary solution
requires the use
of an acyclicity constraint
in multiple optimization problems.
Therefore,
in practice,
\nocurl does not learn
the variables ordering
in an unconstrained way
and adjusts adjacency weights
in the last optimization problem.

\section{Additional Results}\label{app:additionalresults}

In this section, we report further results on simulated DAGs with different noise terms, graph types, and increasing numbers of nodes. For each algorithm, we present the mean and standard deviation of each metric on five independent runs. We report the Area under the ROC Curve (\auc), the True Positive Ratio (\tpr), and the Structural Hamming Distance normalized by the number of nodes (\nhd). The reported duration of \nocurl includes the time to retrieve the necessary preliminary solution through two optimization problems regularized with the \notears acyclicity constraint. We denote as \nocurljoint the variation of \nocurl that solves a unique unconstrained optimization problem without preliminary solution. When not immediate, we highlight in bold the \best{best} result and in italic bold the \rest{second best} result. We do not report methods exceeding our wall time limit of 20000 seconds.

\subsection{ER4 \-- Gaussian Noise}
\begin{center}
\begin{tabular}{clrrrr}
  \toprule
  $d$ & Algorithm & \nhd & \tpr & \auc & Time (s)\\
  \midrule
  \multirow{5}{*}{30} & \underline{\cosmo} &
  \rest{0.867 $\pm$ 1.01} & %
  \best{0.953 $\pm$ 0.04} & %
  \rest{0.984 $\pm$ 0.02} & %
  \best{88 $\pm$ 3} \\ %
  & \dagma &
  \best{0.707 $\pm$ 0.57} & %
  0.940 $\pm$ 0.04 & %
  \best{0.985 $\pm$ 0.01} & %
  781 $\pm$ 193 \\ %
  & \nocurl &
  1.653 $\pm$ 0.17 & %
  \rest{0.942 $\pm$ 0.02} & %
  0.967 $\pm$ 0.01 & %
  822 $\pm$ 15 \\ %
  & \nocurljoint &
  5.623 $\pm$ 0.92 & %
  0.492 $\pm$ 0.08 & %
  0.694 $\pm$ 0.06 & %
  \rest{227 $\pm$ 5} \\ %
  & \notears &
  0.913 $\pm$ 0.60 & %
  0.940 $\pm$ 0.05 & %
  0.973 $\pm$ 0.02 & %
  5193 $\pm$ 170 \\ %
  \midrule
  \multirow{5}{*}{100} & \underline{\cosmo} &
  \rest{1.388 $\pm$ 0.69} & %
  \rest{0.917 $\pm$ 0.04} & %
  0.961 $\pm$ 0.03 & %
  \best{99 $\pm$ 2} \\ %
  & \dagma &
  \best{1.026 $\pm$ 0.40} & %
  0.876 $\pm$ 0.02 & %
  \best{0.982 $\pm$ 0.01} & %
  661 $\pm$ 142 \\ %
  & \nocurl &
  5.226 $\pm$ 1.34 & %
  \best{0.921 $\pm$ 0.02} & %
  0.962 $\pm$ 0.01 & %
  1664 $\pm$ 15 \\ %
  & \nocurljoint &
  10.108 $\pm$ 4.11 & %
  0.427 $\pm$ 0.05 & %
  0.682 $\pm$ 0.05 & %
  \rest{267 $\pm$ 10} \\ %
  & \notears &
  2.380 $\pm$ 2.10 & %
  0.898 $\pm$ 0.03 & %
  \rest{0.963 $\pm$ 0.01} & %
  11001 $\pm$ 340 \\ %
  \midrule
  \multirow{3}{*}{500} & \underline{\cosmo} &
  \rest{4.149 $\pm$ 1.14} & %
  \rest{0.819 $\pm$ 0.02} & %
  \rest{0.933 $\pm$ 0.01} & %
  \best{437 $\pm$ 81} \\ %
  & \dagma &
  \best{2.246 $\pm$ 0.40} & %
  \best{0.882 $\pm$ 0.01} & %
  \best{0.980 $\pm$ 0.00} & %
  2485 $\pm$ 366 \\ %
  & \nocurljoint &
  27.675 $\pm$ 16.52 & %
  0.410 $\pm$ 0.04 & %
  0.683 $\pm$ 0.05 & %
  \rest{1546 $\pm$ 304} \\ %
  \bottomrule
\end{tabular}
\end{center}

\subsection{ER4 \-- Exponential Noise}
\begin{center}
\begin{tabular}{clrrrr}
  \toprule
  $d$ & Algorithm & \nhd & \tpr & \auc & Time (s)\\
  \midrule
  \multirow{5}{*}{30} & \underline{\cosmo} &
  \best{0.600 $\pm$ 0.54} & %
  \best{0.970 $\pm$ 0.02} & %
  \best{0.989 $\pm$ 0.01} & %
  \best{89 $\pm$ 3} \\ %
  & \dagma &
  \rest{0.613 $\pm$ 0.91} & %
  \rest{0.958 $\pm$ 0.05} & %
  \rest{0.986 $\pm$ 0.02} & %
  744 $\pm$ 75 \\ %
  & \nocurl &
  2.300 $\pm$ 0.97 & %
  0.918 $\pm$ 0.04 & %
  0.956 $\pm$ 0.02 & %
  826 $\pm$ 24 \\ %
  & \nocurljoint &
  5.313 $\pm$ 0.17 & %
  0.423 $\pm$ 0.05 & %
  0.694 $\pm$ 0.05 & %
  \rest{212 $\pm$ 5} \\ %
  & \notears &
  1.320 $\pm$ 0.67 & %
  0.880 $\pm$ 0.10 & %
  0.966 $\pm$ 0.03 & %
  5579 $\pm$ 284 \\ %
  \midrule
  \multirow{5}{*}{100} & \underline{\cosmo} &
  1.642 $\pm$ 0.26 & %
  \best{0.952 $\pm$ 0.02} & %
  \rest{0.985 $\pm$ 0.01} & %
  \best{99 $\pm$ 2} \\ %
  & \dagma &
  \rest{1.294 $\pm$ 0.52} & %
  \rest{0.944 $\pm$ 0.02} & %
  \best{0.986 $\pm$ 0.01} & %
  733 $\pm$ 109 \\ %
  & \nocurl &
  5.652 $\pm$ 1.35 & %
  0.854 $\pm$ 0.03 & %
  0.950 $\pm$ 0.02 & %
  1655 $\pm$ 28 \\ %
  & \nocurljoint &
  11.642 $\pm$ 4.34 & %
  0.478 $\pm$ 0.05 & %
  0.693 $\pm$ 0.05 & %
  \rest{242 $\pm$ 4} \\ %
  & \notears &
  \best{1.156 $\pm$ 0.44} & %
  0.904 $\pm$ 0.03 & %
  0.972 $\pm$ 0.01 & %
  10880 $\pm$ 366 \\ %
  \midrule
  \multirow{3}{*}{500} & \underline{\cosmo} &
  \rest{2.342 $\pm$ 0.86} & %
  \best{0.944 $\pm$ 0.02} & %
  \best{0.986 $\pm$ 0.00} & %
  \best{390 $\pm$ 102} \\ %
  & \dagma &
  \best{2.147 $\pm$ 1.08} & %
  \rest{0.902 $\pm$ 0.04} & %
  \rest{0.984 $\pm$ 0.01} & %
  2575 $\pm$ 469 \\ %
  & \nocurljoint &
  20.183 $\pm$ 7.43 & %
  0.437 $\pm$ 0.03 & %
  0.715 $\pm$ 0.03 & %
  \rest{1488 $\pm$ 249} \\ %
  \bottomrule
\end{tabular}
\end{center}

\subsection{ER4 \-- Gumbel Noise}
\begin{center}
\begin{tabular}{clrrrr}
  \toprule
  $d$ & Algorithm & \nhd & \tpr & \auc & Time (s)\\
  \midrule
  \multirow{5}{*}{30} & \underline{\cosmo} &
  2.220 $\pm$ 1.65 & %
  0.862 $\pm$ 0.14 & %
  0.914 $\pm$ 0.10 & %
  \best{87 $\pm$ 2} \\ %
  & \dagma &
  \rest{1.680 $\pm$ 0.73} & %
  \rest{0.937 $\pm$ 0.03} & %
  \rest{0.973 $\pm$ 0.02} & %
  787 $\pm$ 86 \\ %
  & \nocurl &
  3.873 $\pm$ 1.26 & %
  0.853 $\pm$ 0.08 & %
  0.915 $\pm$ 0.04 & %
  826 $\pm$ 17 \\ %
  & \nocurljoint &
  5.260 $\pm$ 0.57 & %
  0.475 $\pm$ 0.08 & %
  0.678 $\pm$ 0.05 & %
  \rest{212 $\pm$ 5} \\ %
  & \notears &
  \best{0.587 $\pm$ 0.38} & %
  \best{0.962 $\pm$ 0.03} & %
  \best{0.981 $\pm$ 0.01} & %
  5229 $\pm$ 338 \\ %
  \midrule
  \multirow{5}{*}{100} & \underline{\cosmo} &
  2.398 $\pm$ 0.70 & %
  \best{0.936 $\pm$ 0.02} & %
  \rest{0.973 $\pm$ 0.01} & %
  \best{98 $\pm$ 1} \\ %
  & \dagma &
  \best{1.132 $\pm$ 0.79} & %
  \rest{0.921 $\pm$ 0.04} & %
  \best{0.986 $\pm$ 0.01} & %
  858 $\pm$ 101 \\ %
  & \nocurl &
  4.714 $\pm$ 1.77 & %
  0.905 $\pm$ 0.03 & %
  0.962 $\pm$ 0.01 & %
  1675 $\pm$ 34 \\ %
  & \nocurljoint &
  6.914 $\pm$ 0.80 & %
  0.383 $\pm$ 0.04 & %
  0.663 $\pm$ 0.04 & %
  \rest{247 $\pm$ 9} \\ %
  & \notears &
  \rest{1.402 $\pm$ 0.40} & %
  0.869 $\pm$ 0.04 & %
  0.969 $\pm$ 0.00 & %
  11889 $\pm$ 343 \\ %
  \midrule
  \multirow{3}{*}{500} & \underline{\cosmo} &
  \rest{3.574 $\pm$ 1.44} & %
  \best{0.932 $\pm$ 0.02} & %
  \best{0.982 $\pm$ 0.01} & %
  \best{410 $\pm$ 106} \\ %
  & \dagma &
  \best{1.737 $\pm$ 0.64} & %
  \rest{0.871 $\pm$ 0.03} & %
  \rest{0.980 $\pm$ 0.00} & %
  2853 $\pm$ 218 \\ %
  & \nocurljoint &
  18.182 $\pm$ 9.28 & %
  0.462 $\pm$ 0.06 & %
  0.728 $\pm$ 0.05 & %
  \rest{1342 $\pm$ 209} \\ %
  \bottomrule
\end{tabular}
\end{center}

\subsection{SF4 \-- Gaussian Noise}
\begin{center}
\begin{tabular}{clrrrr}
  \toprule
  $d$ & Algorithm & \nhd & \tpr & \auc & Time (s)\\
  \midrule
  \multirow{5}{*}{30} & \underline{\cosmo} &
  \best{0.300 $\pm$ 0.09} & %
  \rest{0.973 $\pm$ 0.01} & %
  \best{0.997 $\pm$ 0.00} & %
  \best{89 $\pm$ 5} \\ %
  & \dagma &
  \rest{0.360 $\pm$ 0.30} & %
  \best{0.973 $\pm$ 0.02} & %
  \rest{0.996 $\pm$ 0.01} & %
  653 $\pm$ 198 \\ %
  & \nocurl &
  0.967 $\pm$ 0.43 & %
  0.893 $\pm$ 0.03 & %
  0.983 $\pm$ 0.01 & %
  828 $\pm$ 23 \\ %
  & \nocurljoint &
  4.410 $\pm$ 0.72 & %
  0.566 $\pm$ 0.11 & %
  0.741 $\pm$ 0.08 & %
  \rest{226 $\pm$ 7} \\ %
  & \notears &
  0.553 $\pm$ 0.54 & %
  0.944 $\pm$ 0.06 & %
  0.984 $\pm$ 0.02 & %
  5292 $\pm$ 261 \\ %
  \midrule
  \multirow{5}{*}{100} & \underline{\cosmo} &
  \rest{0.482 $\pm$ 0.31} & %
  \rest{0.962 $\pm$ 0.02} & %
  0.991 $\pm$ 0.01 & %
  \best{99 $\pm$ 3} \\ %
  & \dagma &
  0.712 $\pm$ 0.33 & %
  0.951 $\pm$ 0.02 & %
  \best{0.995 $\pm$ 0.00} & %
  479 $\pm$ 75 \\ %
  & \nocurl &
  2.030 $\pm$ 0.46 & %
  0.883 $\pm$ 0.03 & %
  0.982 $\pm$ 0.01 & %
  1667 $\pm$ 25 \\ %
  & \nocurljoint &
  5.521 $\pm$ 0.61 & %
  0.596 $\pm$ 0.09 & %
  0.788 $\pm$ 0.06 & %
  \rest{269 $\pm$ 9} \\ %
  & \notears &
  \best{0.280 $\pm$ 0.35} & %
  \best{0.972 $\pm$ 0.04} & %
  \rest{0.993 $\pm$ 0.01} & %
  10112 $\pm$ 492 \\ %
  \midrule
  \multirow{3}{*}{500} & \underline{\cosmo} &
  \rest{1.566 $\pm$ 0.68} & %
  \best{0.953 $\pm$ 0.02} & %
  \rest{0.989 $\pm$ 0.01} & %
  \best{541 $\pm$ 15} \\ %
  & \dagma &
  \best{1.343 $\pm$ 0.46} & %
  \rest{0.915 $\pm$ 0.04} & %
  \best{0.992 $\pm$ 0.00} & %
  \rest{1345 $\pm$ 33} \\ %
  & \nocurljoint &
  7.146 $\pm$ 3.19 & %
  0.504 $\pm$ 0.08 & %
  0.780 $\pm$ 0.07 & %
  1394 $\pm$ 217 \\ %
  \bottomrule
\end{tabular}
\end{center}

\subsection{SF4 \-- Exponential Noise}
\begin{center}
\begin{tabular}{clrrrr}
  \toprule
  $d$ & Algorithm & \nhd & \tpr & \auc & Time (s)\\
  \midrule
  \multirow{5}{*}{30} & \underline{\cosmo} &
  0.613 $\pm$ 0.39 & %
  \rest{0.965 $\pm$ 0.02} & %
  0.985 $\pm$ 0.02 & %
  \best{87 $\pm$ 2} \\ %
  & \dagma &
  \best{0.127 $\pm$ 0.20} & %
  \best{0.991 $\pm$ 0.01} & %
  \best{0.999 $\pm$ 0.00} & %
  592 $\pm$ 200 \\ %
  & \nocurl &
  0.887 $\pm$ 0.21 & %
  0.845 $\pm$ 0.02 & %
  \rest{0.985 $\pm$ 0.01} & %
  824 $\pm$ 25 \\ %
  & \nocurljoint &
  4.067 $\pm$ 0.73 & %
  0.460 $\pm$ 0.15 & %
  0.685 $\pm$ 0.09 & %
  \rest{212 $\pm$ 7} \\ %
  & \notears &
  \rest{0.513 $\pm$ 0.30} & %
  0.962 $\pm$ 0.03 & %
  0.984 $\pm$ 0.01 & %
  5189 $\pm$ 271 \\ %
  \midrule
  \multirow{5}{*}{100} & \underline{\cosmo} &
  \rest{0.724 $\pm$ 0.71} & %
  \rest{0.963 $\pm$ 0.04} & %
  0.985 $\pm$ 0.02 & %
  \best{100 $\pm$ 2} \\ %
  & \dagma &
  \best{0.586 $\pm$ 0.56} & %
  \best{0.969 $\pm$ 0.03} & %
  \best{0.995 $\pm$ 0.00} & %
  395 $\pm$ 108 \\ %
  & \nocurl &
  1.998 $\pm$ 0.40 & %
  0.907 $\pm$ 0.03 & %
  0.980 $\pm$ 0.00 & %
  1670 $\pm$ 28 \\ %
  & \nocurljoint &
  5.912 $\pm$ 1.54 & %
  0.575 $\pm$ 0.06 & %
  0.783 $\pm$ 0.04 & %
  \rest{245 $\pm$ 7} \\ %
  & \notears &
  0.910 $\pm$ 0.43 & %
  0.962 $\pm$ 0.02 & %
  \rest{0.991 $\pm$ 0.01} & %
  10243 $\pm$ 723 \\ %
  \midrule
  \multirow{3}{*}{500} & \underline{\cosmo} &
  \best{1.445 $\pm$ 0.58} & %
  \best{0.950 $\pm$ 0.03} & %
  \best{0.990 $\pm$ 0.01} & %
  \best{517 $\pm$ 108} \\ %
  & \dagma &
  \rest{1.653 $\pm$ 0.91} & %
  \rest{0.873 $\pm$ 0.08} & %
  \rest{0.988 $\pm$ 0.01} & %
  1466 $\pm$ 247 \\ %
  & \nocurljoint &
  12.140 $\pm$ 7.84 & %
  0.482 $\pm$ 0.08 & %
  0.727 $\pm$ 0.06 & %
  \rest{1205 $\pm$ 257} \\ %
  \bottomrule
\end{tabular}
\end{center}

\subsection{SF4 \-- Gumbel Noise}
\begin{center}
\begin{tabular}{clrrrr}
  \toprule
  $d$ & Algorithm & \nhd & \tpr & \auc & Time (s)\\
  \midrule
  \multirow{5}{*}{30} & \underline{\cosmo} &
  \best{0.467 $\pm$ 0.51} & %
  \best{0.962 $\pm$ 0.05} & %
  \rest{0.990 $\pm$ 0.02} & %
  \best{88 $\pm$ 2} \\ %
  & \dagma &
  \rest{0.487 $\pm$ 0.20} & %
  \rest{0.956 $\pm$ 0.03} & %
  \best{0.990 $\pm$ 0.01} & %
  754 $\pm$ 179 \\ %
  & \nocurl &
  0.747 $\pm$ 0.19 & %
  0.938 $\pm$ 0.02 & %
  0.989 $\pm$ 0.00 & %
  826 $\pm$ 32 \\ %
  & \nocurljoint &
  3.107 $\pm$ 0.64 & %
  0.460 $\pm$ 0.06 & %
  0.737 $\pm$ 0.04 & %
  \rest{213 $\pm$ 5} \\ %
  & \notears &
  0.860 $\pm$ 0.76 & %
  0.924 $\pm$ 0.06 & %
  0.975 $\pm$ 0.02 & %
  5199 $\pm$ 130 \\ %
  \midrule
  \multirow{5}{*}{100} & \underline{\cosmo} &
  \rest{0.864 $\pm$ 0.24} & %
  \rest{0.968 $\pm$ 0.01} & %
  \rest{0.992 $\pm$ 0.01} & %
  \best{98 $\pm$ 2} \\ %
  & \dagma &
  \best{0.388 $\pm$ 0.30} & %
  \best{0.975 $\pm$ 0.02} & %
  \best{0.997 $\pm$ 0.00} & %
  422 $\pm$ 103 \\ %
  & \nocurl &
  1.806 $\pm$ 0.40 & %
  0.898 $\pm$ 0.03 & %
  0.982 $\pm$ 0.01 & %
  1676 $\pm$ 31 \\ %
  & \nocurljoint &
  8.756 $\pm$ 2.65 & %
  0.550 $\pm$ 0.05 & %
  0.757 $\pm$ 0.03 & %
  \rest{245 $\pm$ 7} \\ %
  & \notears &
  1.134 $\pm$ 0.81 & %
  0.894 $\pm$ 0.08 & %
  0.989 $\pm$ 0.01 & %
  11618 $\pm$ 1309 \\ %
  \midrule
  \multirow{3}{*}{500} & \underline{\cosmo} &
  \rest{1.426 $\pm$ 0.53} & %
  \best{0.951 $\pm$ 0.03} & %
  \best{0.994 $\pm$ 0.00} & %
  \best{524 $\pm$ 22} \\ %
  & \dagma &
  \best{1.384 $\pm$ 0.38} & %
  \rest{0.849 $\pm$ 0.04} & %
  \rest{0.991 $\pm$ 0.00} & %
  1359 $\pm$ 34 \\ %
  & \nocurljoint &
  8.931 $\pm$ 7.05 & %
  0.430 $\pm$ 0.10 & %
  0.741 $\pm$ 0.08 & %
  \rest{1193 $\pm$ 229} \\ %
  \bottomrule
\end{tabular}
\end{center}

\subsection{ER6 \-- Gaussian Noise}
\begin{center}
\begin{tabular}{clrrrr}
  \toprule
  $d$ & Algorithm & \nhd & \tpr & \auc & Time (s)\\
  \midrule
  \multirow{5}{*}{30} & \underline{\cosmo} &
  4.087 $\pm$ 1.12 & %
  0.838 $\pm$ 0.06 & %
  0.921 $\pm$ 0.04 & %
  \best{89 $\pm$ 4} \\ %
  & \dagma &
  \best{2.367 $\pm$ 0.63} & %
  \rest{0.847 $\pm$ 0.03} & %
  \best{0.958 $\pm$ 0.01} & %
  665 $\pm$ 249 \\ %
  & \nocurl &
  4.480 $\pm$ 0.92 & %
  \best{0.869 $\pm$ 0.03} & %
  0.908 $\pm$ 0.03 & %
  909 $\pm$ 18 \\ %
  & \nocurljoint &
  7.490 $\pm$ 1.18 & %
  0.459 $\pm$ 0.08 & %
  0.672 $\pm$ 0.06 & %
  \rest{226 $\pm$ 6} \\ %
  & \notears &
  \rest{3.327 $\pm$ 1.65} & %
  0.840 $\pm$ 0.07 & %
  \rest{0.922 $\pm$ 0.04} & %
  5239 $\pm$ 427 \\ %
  \midrule
  \multirow{5}{*}{100} & \underline{\cosmo} &
  \rest{9.476 $\pm$ 3.01} & %
  0.771 $\pm$ 0.08 & %
  \rest{0.911 $\pm$ 0.05} & %
  \best{98 $\pm$ 2} \\ %
  & \dagma &
  10.740 $\pm$ 2.83 & %
  0.709 $\pm$ 0.13 & %
  0.902 $\pm$ 0.04 & %
  761 $\pm$ 134 \\ %
  & \nocurl &
  15.044 $\pm$ 1.60 & %
  \rest{0.785 $\pm$ 0.04} & %
  0.888 $\pm$ 0.02 & %
  1687 $\pm$ 26 \\ %
  & \nocurljoint &
  30.719 $\pm$ 5.20 & %
  0.435 $\pm$ 0.03 & %
  0.580 $\pm$ 0.04 & %
  \rest{268 $\pm$ 9} \\ %
  & \notears &
  \best{6.556 $\pm$ 3.10} & %
  \best{0.842 $\pm$ 0.05} & %
  \best{0.944 $\pm$ 0.02} & %
  12053 $\pm$ 940 \\ %
  \midrule
  \multirow{3}{*}{500} & \underline{\cosmo} &
  \rest{25.443 $\pm$ 4.47} & %
  \best{0.736 $\pm$ 0.01} & %
  \best{0.937 $\pm$ 0.01} & %
  \best{526 $\pm$ 100} \\ %
  & \dagma &
  \best{15.952 $\pm$ 1.67} & %
  \rest{0.553 $\pm$ 0.05} & %
  \rest{0.925 $\pm$ 0.01} & %
  3207 $\pm$ 271 \\ %
  & \nocurljoint &
  165.465 $\pm$ 20.86 & %
  0.433 $\pm$ 0.02 & %
  0.558 $\pm$ 0.03 & %
  \rest{1226 $\pm$ 293} \\ %
  \bottomrule
\end{tabular}
\end{center}

\subsection{ER6 \-- Exponential Noise}
\begin{center}
\begin{tabular}{clrrrr}
  \toprule
  $d$ & Algorithm & \nhd & \tpr & \auc & Time (s)\\
  \midrule
  \multirow{5}{*}{30} & \underline{\cosmo} &
  \rest{3.300 $\pm$ 0.95} & %
  \best{0.897 $\pm$ 0.05} & %
  \rest{0.947 $\pm$ 0.03} & %
  \best{89 $\pm$ 2} \\ %
  & \dagma &
  3.480 $\pm$ 1.42 & %
  0.861 $\pm$ 0.06 & %
  0.945 $\pm$ 0.03 & %
  672 $\pm$ 177 \\ %
  & \nocurl &
  4.573 $\pm$ 0.78 & %
  0.846 $\pm$ 0.05 & %
  0.902 $\pm$ 0.03 & %
  897 $\pm$ 13 \\ %
  & \nocurljoint &
  8.700 $\pm$ 0.89 & %
  0.426 $\pm$ 0.07 & %
  0.615 $\pm$ 0.06 & %
  \rest{226 $\pm$ 9} \\ %
  & \notears &
  \best{2.313 $\pm$ 1.55} & %
  \rest{0.881 $\pm$ 0.09} & %
  \best{0.953 $\pm$ 0.04} & %
  5516 $\pm$ 652 \\ %
  \midrule
  \multirow{5}{*}{100} & \underline{\cosmo} &
  10.170 $\pm$ 2.74 & %
  0.768 $\pm$ 0.09 & %
  0.919 $\pm$ 0.04 & %
  \best{99 $\pm$ 3} \\ %
  & \dagma &
  \rest{8.118 $\pm$ 3.10} & %
  \rest{0.793 $\pm$ 0.11} & %
  \rest{0.934 $\pm$ 0.04} & %
  681 $\pm$ 149 \\ %
  & \nocurl &
  14.860 $\pm$ 4.67 & %
  0.685 $\pm$ 0.10 & %
  0.863 $\pm$ 0.06 & %
  1735 $\pm$ 39 \\ %
  & \nocurljoint &
  30.600 $\pm$ 4.34 & %
  0.450 $\pm$ 0.04 & %
  0.591 $\pm$ 0.04 & %
  \rest{267 $\pm$ 8} \\ %
  & \notears &
  \best{5.208 $\pm$ 2.54} & %
  \best{0.796 $\pm$ 0.09} & %
  \best{0.943 $\pm$ 0.03} & %
  12663 $\pm$ 1555 \\ %
  \midrule
  \multirow{3}{*}{500} & \underline{\cosmo} &
  \rest{25.854 $\pm$ 4.28} & %
  \best{0.741 $\pm$ 0.04} & %
  \best{0.943 $\pm$ 0.01} & %
  \best{460 $\pm$ 123} \\ %
  & \dagma &
  \best{16.417 $\pm$ 4.45} & %
  \rest{0.571 $\pm$ 0.11} & %
  \rest{0.925 $\pm$ 0.02} & %
  4069 $\pm$ 580 \\ %
  & \nocurljoint &
  152.336 $\pm$ 31.97 & %
  0.425 $\pm$ 0.02 & %
  0.567 $\pm$ 0.03 & %
  \rest{1363 $\pm$ 306} \\ %
  \bottomrule
\end{tabular}
\end{center}

\subsection{ER6 \-- Gumbel Noise}
\begin{center}
\begin{tabular}{clrrrr}
  \toprule
  $d$ & Algorithm & \nhd & \tpr & \auc & Time (s)\\
  \midrule
  \multirow{5}{*}{30} & \underline{\cosmo} &
  2.840 $\pm$ 1.08 & %
  \best{0.906 $\pm$ 0.04} & %
  \rest{0.954 $\pm$ 0.03} & %
  \best{89 $\pm$ 3} \\ %
  & \dagma &
  \best{2.727 $\pm$ 0.83} & %
  \rest{0.906 $\pm$ 0.02} & %
  \best{0.964 $\pm$ 0.02} & %
  634 $\pm$ 194 \\ %
  & \nocurl &
  5.003 $\pm$ 0.72 & %
  0.811 $\pm$ 0.04 & %
  0.891 $\pm$ 0.03 & %
  902 $\pm$ 9 \\ %
  & \nocurljoint &
  8.153 $\pm$ 0.96 & %
  0.422 $\pm$ 0.07 & %
  0.629 $\pm$ 0.04 & %
  \rest{226 $\pm$ 6} \\ %
  & \notears &
  \rest{2.740 $\pm$ 1.61} & %
  0.791 $\pm$ 0.10 & %
  0.938 $\pm$ 0.04 & %
  5416 $\pm$ 446 \\ %
  \midrule
  \multirow{5}{*}{100} & \underline{\cosmo} &
  10.048 $\pm$ 3.15 & %
  0.780 $\pm$ 0.07 & %
  0.899 $\pm$ 0.06 & %
  \best{100 $\pm$ 3} \\ %
  & \dagma &
  \rest{7.910 $\pm$ 3.05} & %
  \rest{0.805 $\pm$ 0.09} & %
  \rest{0.935 $\pm$ 0.04} & %
  715 $\pm$ 203 \\ %
  & \nocurl &
  11.932 $\pm$ 2.68 & %
  0.742 $\pm$ 0.04 & %
  0.894 $\pm$ 0.03 & %
  1688 $\pm$ 34 \\ %
  & \nocurljoint &
  27.401 $\pm$ 4.42 & %
  0.431 $\pm$ 0.05 & %
  0.600 $\pm$ 0.04 & %
  \rest{266 $\pm$ 4} \\ %
  & \notears &
  \best{4.884 $\pm$ 0.45} & %
  \best{0.833 $\pm$ 0.05} & %
  \best{0.951 $\pm$ 0.01} & %
  12634 $\pm$ 639 \\ %
  \midrule
  \multirow{3}{*}{500} & \underline{\cosmo} &
  \rest{26.148 $\pm$ 4.86} & %
  \best{0.740 $\pm$ 0.04} & %
  \best{0.941 $\pm$ 0.02} & %
  \best{418 $\pm$ 106} \\ %
  & \dagma &
  \best{16.358 $\pm$ 4.94} & %
  \rest{0.563 $\pm$ 0.07} & %
  \rest{0.921 $\pm$ 0.02} & %
  3527 $\pm$ 241 \\ %
  & \nocurljoint &
  125.858 $\pm$ 36.61 & %
  0.367 $\pm$ 0.06 & %
  0.571 $\pm$ 0.02 & %
  \rest{1612 $\pm$ 27} \\ %
  \bottomrule
\end{tabular}
\end{center}

\subsection{SF6 \-- Gaussian Noise}
\begin{center}
\begin{tabular}{clrrrr}
  \toprule
  $d$ & Algorithm & \nhd & \tpr & \auc & Time (s)\\
  \midrule
  \multirow{5}{*}{30} & \underline{\cosmo} &
  1.273 $\pm$ 1.07 & %
  0.907 $\pm$ 0.10 & %
  0.963 $\pm$ 0.06 & %
  \best{89 $\pm$ 2} \\ %
  & \dagma &
  \rest{1.107 $\pm$ 0.37} & %
  \best{0.930 $\pm$ 0.03} & %
  \best{0.985 $\pm$ 0.01} & %
  456 $\pm$ 39 \\ %
  & \nocurl &
  1.573 $\pm$ 0.46 & %
  0.864 $\pm$ 0.04 & %
  0.973 $\pm$ 0.01 & %
  823 $\pm$ 14 \\ %
  & \nocurljoint &
  4.997 $\pm$ 0.98 & %
  0.506 $\pm$ 0.05 & %
  0.732 $\pm$ 0.05 & %
  \rest{226 $\pm$ 8} \\ %
  & \notears &
  \best{0.933 $\pm$ 0.71} & %
  \rest{0.919 $\pm$ 0.05} & %
  \rest{0.984 $\pm$ 0.02} & %
  5313 $\pm$ 184 \\ %
  \midrule
  \multirow{5}{*}{100} & \underline{\cosmo} &
  4.478 $\pm$ 2.88 & %
  0.776 $\pm$ 0.15 & %
  0.874 $\pm$ 0.11 & %
  \best{99 $\pm$ 2} \\ %
  & \dagma &
  \rest{2.024 $\pm$ 0.71} & %
  \rest{0.914 $\pm$ 0.02} & %
  \rest{0.987 $\pm$ 0.00} & %
  396 $\pm$ 53 \\ %
  & \nocurl &
  2.824 $\pm$ 0.39 & %
  0.818 $\pm$ 0.02 & %
  0.980 $\pm$ 0.00 & %
  1679 $\pm$ 27 \\ %
  & \nocurljoint &
  10.556 $\pm$ 6.00 & %
  0.542 $\pm$ 0.07 & %
  0.751 $\pm$ 0.08 & %
  \rest{266 $\pm$ 5} \\ %
  & \notears &
  \best{1.412 $\pm$ 0.59} & %
  \best{0.939 $\pm$ 0.03} & %
  \best{0.990 $\pm$ 0.01} & %
  11156 $\pm$ 170 \\ %
  \midrule
  \multirow{3}{*}{500} & \underline{\cosmo} &
  \rest{4.670 $\pm$ 1.99} & %
  \best{0.912 $\pm$ 0.02} & %
  \best{0.984 $\pm$ 0.00} & %
  \best{460 $\pm$ 70} \\ %
  & \dagma &
  \best{3.825 $\pm$ 0.19} & %
  \rest{0.746 $\pm$ 0.03} & %
  \rest{0.982 $\pm$ 0.00} & %
  1418 $\pm$ 54 \\ %
  & \nocurljoint &
  19.793 $\pm$ 11.03 & %
  0.368 $\pm$ 0.04 & %
  0.698 $\pm$ 0.04 & %
  \rest{1137 $\pm$ 231} \\ %
  \bottomrule
\end{tabular}
\end{center}

\subsection{SF6 \-- Exponential Noise}
\begin{center}
\begin{tabular}{clrrrr}
  \toprule
  $d$ & Algorithm & \nhd & \tpr & \auc & Time (s)\\
  \midrule
  \multirow{5}{*}{30} & \underline{\cosmo} &
  1.393 $\pm$ 1.24 & %
  0.926 $\pm$ 0.05 & %
  0.975 $\pm$ 0.03 & %
  \best{88 $\pm$ 1} \\ %
  & \dagma &
  \rest{1.147 $\pm$ 0.48} & %
  \rest{0.943 $\pm$ 0.03} & %
  \rest{0.982 $\pm$ 0.01} & %
  578 $\pm$ 173 \\ %
  & \nocurl &
  1.987 $\pm$ 0.54 & %
  0.757 $\pm$ 0.08 & %
  0.967 $\pm$ 0.01 & %
  820 $\pm$ 8 \\ %
  & \nocurljoint &
  4.787 $\pm$ 0.99 & %
  0.534 $\pm$ 0.07 & %
  0.761 $\pm$ 0.06 & %
  \rest{227 $\pm$ 7} \\ %
  & \notears &
  \best{0.753 $\pm$ 0.49} & %
  \best{0.943 $\pm$ 0.04} & %
  \best{0.986 $\pm$ 0.01} & %
  5312 $\pm$ 258 \\ %
  \midrule
  \multirow{5}{*}{100} & \underline{\cosmo} &
  3.836 $\pm$ 2.75 & %
  0.864 $\pm$ 0.09 & %
  0.944 $\pm$ 0.05 & %
  \best{98 $\pm$ 2} \\ %
  & \dagma &
  \best{1.532 $\pm$ 0.61} & %
  0.887 $\pm$ 0.04 & %
  \best{0.988 $\pm$ 0.00} & %
  373 $\pm$ 88 \\ %
  & \nocurl &
  2.890 $\pm$ 0.61 & %
  \rest{0.910 $\pm$ 0.02} & %
  0.977 $\pm$ 0.00 & %
  1692 $\pm$ 28 \\ %
  & \nocurljoint &
  6.607 $\pm$ 1.05 & %
  0.474 $\pm$ 0.06 & %
  0.760 $\pm$ 0.06 & %
  \rest{266 $\pm$ 2} \\ %
  & \notears &
  \rest{1.784 $\pm$ 0.52} & %
  \best{0.939 $\pm$ 0.02} & %
  \rest{0.988 $\pm$ 0.00} & %
  11369 $\pm$ 519 \\ %
  \midrule
  \multirow{3}{*}{500} & \underline{\cosmo} &
  \best{3.144 $\pm$ 0.47} & %
  \best{0.919 $\pm$ 0.02} & %
  \best{0.989 $\pm$ 0.00} & %
  \best{457 $\pm$ 81} \\ %
  & \dagma &
  \rest{3.854 $\pm$ 0.34} & %
  \rest{0.750 $\pm$ 0.01} & %
  \rest{0.977 $\pm$ 0.01} & %
  \rest{1384 $\pm$ 33} \\ %
  & \nocurljoint &
  13.763 $\pm$ 8.79 & %
  0.389 $\pm$ 0.05 & %
  0.728 $\pm$ 0.06 & %
  1436 $\pm$ 230 \\ %
  \bottomrule
\end{tabular}
\end{center}

\subsection{SF6 \-- Gumbel Noise}
\begin{center}
\begin{tabular}{clrrrr}
  \toprule
  $d$ & Algorithm & \nhd & \tpr & \auc & Time (s)\\
  \midrule
  \multirow{5}{*}{30} & \underline{\cosmo} &
  \best{1.047 $\pm$ 0.42} & %
  \best{0.938 $\pm$ 0.03} & %
  \best{0.984 $\pm$ 0.01} & %
  \best{88 $\pm$ 1} \\ %
  & \dagma &
  1.347 $\pm$ 0.63 & %
  \rest{0.933 $\pm$ 0.02} & %
  \rest{0.981 $\pm$ 0.01} & %
  528 $\pm$ 67 \\ %
  & \nocurl &
  1.787 $\pm$ 0.52 & %
  0.898 $\pm$ 0.02 & %
  0.969 $\pm$ 0.01 & %
  822 $\pm$ 29 \\ %
  & \nocurljoint &
  5.577 $\pm$ 0.43 & %
  0.549 $\pm$ 0.06 & %
  0.733 $\pm$ 0.05 & %
  \rest{225 $\pm$ 4} \\ %
  & \notears &
  \rest{1.053 $\pm$ 0.59} & %
  0.911 $\pm$ 0.04 & %
  0.978 $\pm$ 0.02 & %
  5429 $\pm$ 251 \\ %
  \midrule
  \multirow{5}{*}{100} & \underline{\cosmo} &
  3.486 $\pm$ 2.62 & %
  0.879 $\pm$ 0.10 & %
  0.947 $\pm$ 0.06 & %
  \best{99 $\pm$ 2} \\ %
  & \dagma &
  \best{1.418 $\pm$ 0.34} & %
  \rest{0.910 $\pm$ 0.03} & %
  \best{0.990 $\pm$ 0.00} & %
  424 $\pm$ 90 \\ %
  & \nocurl &
  3.074 $\pm$ 0.50 & %
  0.893 $\pm$ 0.02 & %
  0.976 $\pm$ 0.00 & %
  1682 $\pm$ 22 \\ %
  & \nocurljoint &
  9.643 $\pm$ 4.59 & %
  0.464 $\pm$ 0.08 & %
  0.712 $\pm$ 0.10 & %
  \rest{267 $\pm$ 9} \\ %
  & \notears &
  \rest{1.586 $\pm$ 1.39} & %
  \best{0.913 $\pm$ 0.06} & %
  \rest{0.987 $\pm$ 0.01} & %
  11820 $\pm$ 985 \\ %
  \midrule
  \multirow{3}{*}{500} & \underline{\cosmo} &
  \best{3.288 $\pm$ 0.50} & %
  \best{0.931 $\pm$ 0.01} & %
  \best{0.992 $\pm$ 0.00} & %
  \best{429 $\pm$ 87} \\ %
  & \dagma &
  \rest{4.055 $\pm$ 0.88} & %
  \rest{0.802 $\pm$ 0.03} & %
  \rest{0.981 $\pm$ 0.00} & %
  1465 $\pm$ 138 \\ %
  & \nocurljoint &
  56.103 $\pm$ 41.06 & %
  0.420 $\pm$ 0.06 & %
  0.648 $\pm$ 0.07 & %
  \rest{1201 $\pm$ 253} \\ %
  \bottomrule
\end{tabular}
\end{center}

\subsection{ER4 \-- Non-linear MLP}
\begin{center}
\begin{tabular}{clrrrr}
  \toprule
  $d$ & Algorithm & \nhd & \tpr & \auc & Time (s)\\
  \midrule
  \multirow{3}{*}{20} & \underline{\cosmo} &
  2.530 $\pm$ 0.45 & %
  0.752 $\pm$ 0.06 & %
  0.923 $\pm$ 0.03 & %
  154 $\pm$ 2 \\ %
  & \model{dagma-mlp} &
  2.853 $\pm$ 0.37 & %
  0.810 $\pm$ 0.15 & %
  0.932 $\pm$ 0.04 & %
  2252 $\pm$ 40 \\ %
  & \model{notears-mlp} &
  3.270 $\pm$ 0.52 & %
  0.904 $\pm$ 0.07 & %
  0.956 $\pm$ 0.02 & %
  3554 $\pm$ 154 \\ %
  \midrule
  \multirow{2}{*}{40} & \underline{\cosmo} &
  3.295 $\pm$ 0.45 & %
  0.712 $\pm$ 0.05 & %
  0.925 $\pm$ 0.02 & %
  177 $\pm$ 2 \\ %
  & \model{dagma-mlp} &
  3.633 $\pm$ 0.81 & %
  0.804 $\pm$ 0.10 & %
  0.942 $\pm$ 0.03 & %
  2622 $\pm$ 28 \\ %
  \midrule
  \multirow{2}{*}{100} & \underline{\cosmo} &
  5.164 $\pm$ 1.38 & %
  0.582 $\pm$ 0.04 & %
  0.895 $\pm$ 0.02 & %
  311 $\pm$ 10 \\ %
  & \model{dagma-mlp} &
  2.337 $\pm$ 0.18 & %
  0.506 $\pm$ 0.05 & %
  0.912 $\pm$ 0.01 & %
  4866 $\pm$ 69 \\ %
  \bottomrule
\end{tabular}
\end{center}

\subsection{SF4 \-- Non-linear MLP}
\begin{center}
\begin{tabular}{clrrrr}
  \toprule
  $d$ & Algorithm & \nhd & \tpr & \auc & Time (s)\\
  \midrule
  \multirow{3}{*}{20} & \underline{\cosmo} &
  1.525 $\pm$ 0.18 & %
  0.743 $\pm$ 0.05 & %
  0.954 $\pm$ 0.03 & %
  155 $\pm$ 3 \\ %
  & \model{dagma-mlp} &
  1.408 $\pm$ 0.26 & %
  0.746 $\pm$ 0.11 & %
  0.968 $\pm$ 0.01 & %
  2270 $\pm$ 27 \\ %
  & \model{notears-mlp} &
  1.040 $\pm$ 0.33 & %
  0.922 $\pm$ 0.08 & %
  0.980 $\pm$ 0.02 & %
  3400 $\pm$ 184 \\ %
  \midrule
  \multirow{2}{*}{40} & \underline{\cosmo} &
  2.271 $\pm$ 0.24 & %
  0.583 $\pm$ 0.05 & %
  0.953 $\pm$ 0.01 & %
  174 $\pm$ 5 \\ %
  & \model{dagma-mlp} &
  1.875 $\pm$ 0.34 & %
  0.744 $\pm$ 0.10 & %
  0.972 $\pm$ 0.01 & %
  2588 $\pm$ 45 \\ %
  \midrule
  \multirow{2}{*}{100} & \underline{\cosmo} &
  3.144 $\pm$ 0.23 & %
  0.455 $\pm$ 0.04 & %
  0.945 $\pm$ 0.01 & %
  313 $\pm$ 6 \\ %
  & \model{dagma-mlp} &
  3.051 $\pm$ 0.14 & %
  0.260 $\pm$ 0.04 & %
  0.961 $\pm$ 0.01 & %
  4883 $\pm$ 77 \\ %
  \bottomrule
\end{tabular}
\end{center}

\subsection{ER6 \-- Non-linear MLP}
\begin{center}
\begin{tabular}{clrrrr}
  \toprule
  $d$ & Algorithm & \nhd & \tpr & \auc & Time (s)\\
  \midrule
  \multirow{3}{*}{20} & \underline{\cosmo} &
  2.995 $\pm$ 0.45 & %
  0.667 $\pm$ 0.07 & %
  0.919 $\pm$ 0.03 & %
  155 $\pm$ 2 \\ %
  & \model{dagma-mlp} &
  2.992 $\pm$ 0.39 & %
  0.712 $\pm$ 0.08 & %
  0.917 $\pm$ 0.02 & %
  2252 $\pm$ 35 \\ %
  & \model{notears-mlp} &
  2.895 $\pm$ 0.47 & %
  0.862 $\pm$ 0.08 & %
  0.949 $\pm$ 0.02 & %
  3557 $\pm$ 130 \\ %
  \midrule
  \multirow{2}{*}{40} & \underline{\cosmo} &
  4.837 $\pm$ 0.66 & %
  0.598 $\pm$ 0.07 & %
  0.891 $\pm$ 0.02 & %
  174 $\pm$ 3 \\ %
  & \model{dagma-mlp} &
  3.732 $\pm$ 0.71 & %
  0.633 $\pm$ 0.13 & %
  0.919 $\pm$ 0.02 & %
  2570 $\pm$ 44 \\ %
  \midrule
  \multirow{2}{*}{100} & \underline{\cosmo} &
  6.049 $\pm$ 0.91 & %
  0.501 $\pm$ 0.04 & %
  0.875 $\pm$ 0.02 & %
  308 $\pm$ 4 \\ %
  & \model{dagma-mlp} &
  3.790 $\pm$ 0.24 & %
  0.535 $\pm$ 0.05 & %
  0.908 $\pm$ 0.01 & %
  4807 $\pm$ 80 \\ %
  \bottomrule
\end{tabular}
\end{center}

\subsection{SF6 \-- Non-linear MLP}
\begin{center}
\begin{tabular}{clrrrr}
  \toprule
  $d$ & Algorithm & \nhd & \tpr & \auc & Time (s)\\
  \midrule
  \multirow{3}{*}{20} & \underline{\cosmo} &
  1.855 $\pm$ 0.28 & %
  0.748 $\pm$ 0.06 & %
  0.959 $\pm$ 0.01 & %
  154 $\pm$ 2 \\ %
  & \model{dagma-mlp} &
  1.833 $\pm$ 0.52 & %
  0.766 $\pm$ 0.13 & %
  0.957 $\pm$ 0.03 & %
  2249 $\pm$ 40 \\ %
  & \model{notears-mlp} &
  1.355 $\pm$ 0.39 & %
  0.926 $\pm$ 0.08 & %
  0.980 $\pm$ 0.01 & %
  3512 $\pm$ 193 \\ %
  \midrule
  \multirow{2}{*}{40} & \underline{\cosmo} &
  3.394 $\pm$ 0.25 & %
  0.501 $\pm$ 0.06 & %
  0.946 $\pm$ 0.01 & %
  174 $\pm$ 4 \\ %
  & \model{dagma-mlp} &
  2.720 $\pm$ 0.38 & %
  0.724 $\pm$ 0.08 & %
  0.967 $\pm$ 0.01 & %
  2572 $\pm$ 100 \\ %
  \midrule
  \multirow{2}{*}{100} & \underline{\cosmo} &
  4.744 $\pm$ 0.21 & %
  0.342 $\pm$ 0.04 & %
  0.931 $\pm$ 0.01 & %
  308 $\pm$ 6 \\ %
  & \model{dagma-mlp} &
  4.149 $\pm$ 0.17 & %
  0.391 $\pm$ 0.06 & %
  0.968 $\pm$ 0.01 & %
  4699 $\pm$ 338 \\ %
  \bottomrule
\end{tabular}
\end{center}

\end{document}